\def\Eta{H}
\def\RR{\mathbb{R}}
\def\diff#1{\operatorname{d}\!{#1}}
\DeclareMathOperator*{\argmax}{arg\,max}
\DeclareMathOperator*{\expect}{\mathlarger{\mathbb{E}}}
\definecolor{DarkBlue}{HTML}{36587E}
\declaretheoremstyle[headpunct={\normalfont.}]{theoremstyle}
\newcommand{\owntag}[2][\relax]{%
  \ifx#1\relax\relax\def\owntag@name{#2}\else\def\owntag@name{#1}\fi%
  \refstepcounter{equation}\tag{#2, \theequation}%
  \expandafter\ltx@label\expandafter{eq:\owntag@name}%
  \edef\@currentlabel{#2, \theequation}\expandafter\ltx@label\expandafter{Eq:\owntag@name}%
  \def\@currentlabel{#2}\expandafter\ltx@label\expandafter{tag:\owntag@name}%
}
\newcommand*\wt[1]{\mathpalette\wthelper{#1}}
\newcommand*\wthelper[2]{%
        \hbox{\dimen@\accentfontxheight#1%
                \accentfontxheight#11.2\dimen@
                $\m@th#1\widetilde{#2}$%
                \accentfontxheight#1\dimen@
        }%
}
\newcommand*\accentfontxheight[1]{%
        \fontdimen5\ifx#1\displaystyle
                \textfont
        \else\ifx#1\textstyle
                \textfont
        \else\ifx#1\scriptstyle
                \scriptfont
        \else
                \scriptscriptfont
        \fi\fi\fi3
}
\title{Informed POMDP: Leveraging Additional \\ Information in Model-Based RL}
\author{%
    Gaspard Lambrechts \\
    \texttt{gaspard.lambrechts@uliege.be} \\
    Montefiore Institute, University of Liège \\
    \And
    Adrien Bolland \\
    \texttt{adrien.bolland@uliege.be} \\
    Montefiore Institute, University of Liège \\
    \And
    Damien Ernst \\
    \texttt{dernst@uliege.be} \\
    Montefiore Institute, University of Liège \\
    LTCI, Télécom Paris, Institut Polytechnique de Paris \\
}
\begin{document}

\maketitle

\begin{abstract}
    In this work, we generalize the problem of learning through interaction in a POMDP by accounting for eventual additional information available at training time.
    First, we introduce the informed POMDP, a new learning paradigm offering a clear distinction between the information at training and the observation at execution.
    Next, we propose an objective that leverages this information for learning a sufficient statistic of the history for the optimal control.
    We then adapt this informed objective to learn a world model able to sample latent trajectories.
    Finally, we empirically show a learning speed improvement in several environments using this informed world model in the Dreamer algorithm.
    These results and the simplicity of the proposed adaptation advocate for a systematic consideration of eventual additional information when learning in a POMDP using model-based RL.
\end{abstract}

\section{Introduction} \label{sec:introduction}

Reinforcement learning (RL) aims to learn to act optimally through interaction with environments whose dynamics are unknown.
A major challenge in this field is partial observability, where only a partial observation $o$ of the Markovian state of the environment $s$ is available for taking action $a$.
Such an environment can be formalized as a partially observable Markov decision process (POMDP).
In this context, an optimal policy $\eta(a | h)$ generally depends on the history $h$ of all observations and previous actions, which grows linearly with time.
Fortunately, it is theoretically possible to find a statistic $f(h)$ of the history $h$ that is updated recurrently and that summarizes all relevant information to act optimally.
Such a statistic is said to be recurrent and sufficient for the optimal control.
Formally, a statistic $f(h)$ is recurrent when it is updated according to $f(h') = u(f(h), a, o')$ each time an action $a$ is taken and a new observation $o'$ is received, with $h' = (h, a, o')$.
And a statistic $f(h)$ is sufficient for the optimal control when there exists an optimal policy $\eta(a | h) = \allowbreak g(a | f(h))$.

In view of the existence of recurrent and sufficient statistics, many approaches have relied on learning a recurrent policy $\eta_{\theta, \phi}(a | h) = \allowbreak g_\phi(a | f_\theta(h))$ using a recurrent neural network (RNN) $f_\theta$ for the statistic.
These policies are simply trained by stochastic gradient ascent of a RL objective using backpropagation through time \citep{bakker2001reinforcement, wierstra2010recurrent, hausknecht2015deep, heess2015memory, zhang2016learning, zhu2017improving}.
In this case, the RNN learns a sufficient statistic $f_\theta(h)$ as it learns an optimal policy \citep{lambrechts2022recurrent, hennig2023emergence}.
Although these approaches theoretically allow implicit learning of a sufficient statistic, sufficient statistics can also be learned explicitly.
Notably, many works \citep{igl2018deep, buesing2018learning, guo2018neural, gregor2019shaping, han2019variational, guo2020bootstrap, lee2020stochastic, hafner2019learning, hafner2020dream} focused on learning a recurrent statistic that encodes the reward and next observation distribution given the action: $p(r, o' | h, a) = p(r, o' | f(h), a)$, a property known as predictive sufficiency \citep{bernardo2009bayesian}.
A recurrent and predictive statistic is indeed proven to be sufficient for the optimal control \citep{subramanian2022approximate}.
The sufficiency objective is usually pursued jointly with the RL objective.

While these methods can learn sufficient statistics and optimal policies in the context of POMDPs, they learn solely from the observations.
However, assuming the same partial observability at training time and execution time is too pessimistic for many environments, notably for those that are simulated.
We claim that additional information about the state $s$, be it partial or complete, can be leveraged during training for learning sufficient statistics more efficiently.
To this end, we generalize the problem of learning from interaction in a POMDP by proposing the informed POMDP.
This formalization introduces the training information $i$ about the state $s$, which is only available at training time.
Importantly, this training information is designed such that the observation is conditionally independent of the state given the information.
Note that it is always possible to design such an information $i$, possibly by concatenating the observation $o$ with the eventual additional observations $o^{\scriptscriptstyle+}$, such that $i = (o, o^{\scriptscriptstyle+})$.
This formalization offers a new learning paradigm where the training information is used along the reward and observation to supervise the learning of the policy.

In this context, we prove that recurrent statistics are sufficient for the optimal control when they are predictive sufficient for the reward and next information given the action: $p(r, i' | h, a) = p(r, i' | f(h), a)$.
We then derive a learning objective for finding a predictive sufficient statistic, which amounts to approximating the conditional distribution $p(r, i' | h, a)$ through likelihood maximization using a model $q_\theta(r, i' | f_\theta(h), a)$, where $f_\theta$ is the recurrent statistic.
Compared to the classic objective for learning sufficient statistics \citep{igl2018deep, buesing2018learning, han2019variational, hafner2019learning}, this objective approximates $p(r, i' | h, a)$ instead of $p(r, o' | h, a)$.
Next, we show that this learned model $q_\theta(r, i'| f_\theta(h), a)$ can be adapted to provide a world model from which latent trajectories can be sampled without explicitly reconstructing the observation.
This approach boils down to adapting latent world models such as those of PlaNet or Dreamer \citep{hafner2019learning, hafner2020dream, hafner2021mastering, hafner2023mastering} by relying on a model of the information instead of a model of the observation.
Our claims are supported by experiments in several environments that we formalize as informed POMDPs (Mountain Hike, Velocity Control, Pop Gym, Flickering Atari and Flickering Control).
The informed adaptation of Dreamer exhibits an improvement in terms of convergence speed and policy performance in many environments, while sometimes hurting performance in others.

This work is structured as follows.
In \autoref{sec:related}, we present some related works in asymmetric learning and multi-agent RL.
In \autoref{sec:informed}, the informed POMDP is presented with the underlying execution POMDP.
In \autoref{sec:optimal}, we provide a learning objective for sufficient statistics in this context.
In \autoref{sec:model}, we adapt the Dreamer algorithm to informed POMDPs using this informed objective.
In \autoref{sec:experiments}, we compare the Uninformed Dreamer and the Informed Dreamer in several environments.

\section{Related works} \label{sec:related}

In RL for POMDPs, asymmetric learning consists of exploiting state information during training.
These approaches usually learn policies for the POMDP by imitating a policy conditioned on the state \citep{choudhury2018data}.
However, these heuristic approaches lack a theoretical framework, and the resulting policies are known to be suboptimal for the POMDP \citep{warrington2021robust, baisero2022asymmetric}.
Intuitively, optimal policies in POMDP might indeed need to consider actions that reduce state uncertainty.
\citet{warrington2021robust} addressed this issue by constraining the expert policy so that its imitation results in an optimal policy in the POMDP.
Alternatively, asymmetric actor-critic approaches use a critic conditioned on the state \citep{pinto2017asymmetric}.
These approaches were proven to provide biased gradients by \citet{baisero2022unbiased}, who also proposed an unbiased actor-critic approach by introducing the history-state value function $V(h, s)$.
\citet{baisero2022asymmetric} adapted this method to value-based RL, where the history-dependent value function $V(h)$ uses the history-state value function $V(h, s)$ in its temporal difference target.
Alternatively, \citet{nguyen2021belief} proposed to enforce that the statistic $f(h)$ encodes the belief $p(s | h)$, a sufficient statistic for the optimal control \citep{astrom1965optimal}.
It makes the strong assumption that beliefs $b(s) = p(s|h)$ are available at training time.
Finally, in a concurrent work, \citet{avalos2024wasserstein} learns a statistic $f(h)$ that encodes the belief distribution $p(s | h)$ by leveraging the states during training.

In multi-agent RL, exploiting additional information available at training time was extensively studied under the centralized training and decentralized execution (CTDE) framework \citep{oliehoek2008optimal}.
In CTDE, it is assumed that the histories of all agents, or even the environment state, are available to all agents at training time.
To exploit this additional information, several asymmetric actor-critic approaches have been developed by leveraging an asymmetric critic conditioned on all histories, including COMA \citep{foerster2018counterfactual}, MADDPG \citep{lowe2017multi}, M3DDPG \citep{li2019robust} and R-MADDPG \citep{wang2020partially}.
While efficient in practice, \citet{lyu2022deeper} showed that these asymmetric actor-critic approaches provide biased gradient estimates, which generalizes results developed for asymmetric learning in POMDP \citep{baisero2022unbiased} to the multi-agent setting.
In the cooperative CTDE setting, another line of work focuses on value decomposition to learn a utility function for each agent, including QMIX \citep{rashid2018qmix}, QVMix \citep{leroy2021qvmix} and QPLEX \citep{wang2021qplex}.
These approaches use the additional information to modulate the contribution of each utility function in the global value function, while ensuring that maximizing the local utility functions also maximize the global value function, a property known as individual global max (IGM).
Other methods relax this IGM requirement but still condition the value function on all histories, including QTRAN \citep{son2019qtran} and WQMix \citep{rashid2020weighted}.
Recently, \citet{hong2022rethinking} established that the IGM decomposition is not attainable in the general case.

In contrast to the existing literature on asymmetric learning in POMDP, we introduce an objective that provides a sufficient statistic for the optimal control, and that leverages the additional information only through the objective.
Moreover, our new learning paradigm is not restricted to state supervision, but supports any level of additional information.
Finally, to the best of our knowledge, our method is the first to exploit additional information for learning an environment model of the POMDP.
While our approach is probably applicable to the CTDE setting for learning sufficient statistics from the local histories of each agent, we leave it as future work.

\section{Informed POMDP} \label{sec:informed}

In this section, we introduce the informed POMDP and the associated training information, along with the underlying execution POMDP and the RL objective in this context.

\subsection{Informed POMDP and execution POMDP} \label{subsec:informed}

\begin{wrapfigure}{r}{0.45\textwidth}
    \centering
    \vspace{-3.05ex}
    \begin{tikzpicture}[font=\scriptsize]
        \input{tikz/styles.tex}
        \draw[zone, opacity=0.30] (-0.4, -0.70) rectangle (4.4, -2.35) ;
\draw[zone, opacity=0.40] (-0.3, -1.75) rectangle (4.3, -2.25) ;

\input{tikz/trajectory}

\node[state] (s0) at (0, 0) {} ; \node[label] (s0t) at (s0) {$s$} ;
\node[state] (s1) at (2, 0) {} ; \node[label] (s1t) at (s1) {$s$} ;
\node[state] (s2) at (4, 0) {} ; \node[label] (s2t) at (s2) {$s$} ;

\draw[arrow] (s0) to (i0) ;
\draw[arrow] (i0) to (o0) ;

\draw[arrow] (s0) to (r0) ; \draw[arrow] (a0) to (r0) ;
\draw[arrow] (s0) to (s1) ; \draw[arrow] (a0) to (s1) ;

\draw[arrow] (s1) to (i1) ;
\draw[arrow] (i1) to (o1) ;

\draw[arrow] (s1) to (r1) ; \draw[arrow] (a1) to (r1) ;
\draw[arrow] (s1) to (s2) ; \draw[arrow] (a1) to (s2) ;

\draw[arrow] (s2) to (i2) ;
\draw[arrow] (i2) to (o2) ;
        \input{tikz/dynamics.tex}
        \input{tikz/legend.tex}
    \end{tikzpicture}
    \vspace{-3ex}
    \caption{Bayesian network of an informed POMDP execution.}
    \label{fig:ipomdp}
    \vspace{-3ex}
\end{wrapfigure}

Formally, an informed POMDP ${\wt{\mathcal{P}}}$ is defined as a tuple ${\wt{\mathcal{P}} = (\mathcal{S}, \mathcal{A}, \mathcal{I}, \mathcal{O}, T, R, \wt{I}, \wt{O}, P, \gamma)}$ where $\mathcal{S}$ is the state space, $\mathcal{A}$ is the action space, $\mathcal{I}$ is the information space, and $\mathcal{O}$ is the observation space.
The initial state distribution $P$ gives the probability $P(s_0)$ of $s_0 \in \mathcal{S}$ being the initial state of the decision process.
The dynamics are described by the transition distribution $T$ that gives the probability $T(s_{t+1} | s_t, a_t)$ of $s_{t+1} \in \mathcal{S}$ being the state resulting from action $a_t \in \mathcal{A}$ in state $s_t \in \mathcal{S}$.
The reward function $R$ gives the expected immediate reward $r_t = R(s_t, a_t)$ obtained at each transition.
The information distribution ${\wt{I}}$ gives the probability ${\wt{I}(i_t | s_t)}$ to get information $i_t \in \mathcal{I}$ in state $s_t \in \mathcal{S}$, and
the observation distribution ${\wt{O}}$ gives the probability ${\wt{O}(o_t | i_t)}$ to get observation $o_t \in \mathcal{O}$ given information $i_t$.
Finally, the discount factor $\gamma \in \left[0, 1\right]$ gives the relative importance of future rewards.
The main assumption about an informed POMDP is that the observation $o_t$ is conditionally independent of the state $s_t$ given the information $i_t$: ${p(o_t | i_t, s_t) = \wt{O}(o_t | i_t)}$.
In other words, the random variables $s_t$, $i_t$ and $o_t$ satisfy the Bayesian network $s_t \longrightarrow i_t \longrightarrow o_t$.
In practice, it is always possible to define such a training information $i_t$.
For example, the information $i_t = (o_t, o_t^{\scriptscriptstyle+})$ satisfies the aforementioned conditional independence for any $o_t^{\scriptscriptstyle+}$.
Taking a sequence of $t$ actions in the informed POMDP conditions its execution and provides samples $(i_0, o_0, a_0, r_0, \dots, i_t, o_t)$ at training time, as illustrated in \autoref{fig:ipomdp}.

For each informed POMDP, there is an underlying execution POMDP that is defined as $\mathcal{P} = (\mathcal{S}, \mathcal{A}, \mathcal{O}, T, R, O, P, \gamma)$, where ${O(o_t | s_t) = \int_{\mathcal{I}} \wt{O}(o_t | i) \wt{I}(i | s_t) \diff i}$.
Taking a sequence of $t$ actions in the execution POMDP conditions its execution and provides the history $h_t = (o_0, a_0, \dots, o_t) \in \mathcal{H}$, where $\mathcal{H}$ is the set of histories of arbitrary length.
Note that the information samples $i_0, \dots, i_t$ and reward samples $r_0, \dots, r_{t-1}$ are not included, since they are not available at execution time.

\subsection{RL objective} \label{subsec:objective}

A policy $\eta \in \Eta$ is a mapping from histories to probability measures over the action space, where $\Eta = \mathcal{H} \rightarrow \Delta(\mathcal{A})$ is the set of such mappings.
A policy is said to be optimal for an informed POMDP when it is optimal in the underlying execution POMDP, i.e., when it maximizes the expected return
\begin{align}
    J(\eta) = \expect_{
        \substack{
            P(s_0) \\
            O(o_t | s_t) \\
            \eta(a_t | h_t) \\
            T(s_{t+1} | s_t, a_t)
        }
    } \left[
        \sum_{t=0}^\infty \gamma^t R(s_t, a_t)
    \right]\!.
    \label{eq:return}
\end{align}
The RL objective for an informed POMDP is thus to find an optimal policy $\eta^* \in \argmax_{\eta \in \Eta} J(\eta)$ for the execution POMDP from interaction with the informed POMDP.

\section{Optimal control with recurrent sufficient statistics} \label{sec:optimal}

In this section, we introduce the notion of sufficient statistic for the optimal control and derive an objective for learning such a statistic in an informed POMDP.
For the sake of conciseness, we simply use $x$ to denote a random variable at the current time step and $x'$ to denote it at the next time step.
Moreover, we use the composition notation $g \circ f$ to denote the history-dependent policy $g(\cdot | f(\cdot))$.

\subsection{Recurrent sufficient statistics} \label{subsec:recurrent}

Let us first define the concept of sufficient statistic, and derive a necessary condition for optimality.
\begin{restatable}[Sufficient statistic]{definition}{def:sufficiency} \label{defsufficiency}
    In an informed POMDP $\wt{\mathcal{P}}$ and in its underlying execution POMDP $\mathcal{P}$, a statistic of the history $f\colon \mathcal{H} \rightarrow \mathcal{Z}$ is sufficient for the optimal control if, and only if,
    \begin{align}
        \max_{g\colon \mathcal{Z} \rightarrow \Delta(\mathcal{A})} J(g \circ f) = \max_{\eta\colon \mathcal{H} \rightarrow \Delta(\mathcal{A})} J(\eta).
        \label{eq:sufficient}
    \end{align}
\end{restatable}
\begin{restatable}[Sufficiency of optimal policies]{corollary}{cor:sufficiency} \label{corsufficiency}
    In an informed POMDP $\wt{\mathcal{P}}$ and in its underlying execution POMDP $\mathcal{P}$, if a policy $\eta = g \circ f$ is optimal, then the statistic $f: \mathcal{H} \rightarrow \mathcal{Z}$ is sufficient for the optimal control.
\end{restatable}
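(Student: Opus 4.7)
The plan is a short squeeze argument using the definition of sufficiency directly, with no machinery beyond the fact that every policy of the form $g' \circ f$ is itself a history-dependent policy.

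First, I would observe the trivial inequality
\begin{align}
    \max_{g'\colon \mathcal{Z} \to \Delta(\mathcal{A})} J(g' \circ f) \le \max_{\eta\colon \mathcal{H} \to \Delta(\mathcal{A})} J(\eta),
    \nonumber
\end{align}
which holds because any $g' \circ f$ is a mapping from histories to $\Delta(\mathcal{A})$, hence an element of $\Eta$, so the right-hand maximum is taken over a superset.

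Second, using the hypothesis that $\eta = g \circ f$ is optimal, I would write
\begin{align}
    \max_{\eta\colon \mathcal{H} \to \Delta(\mathcal{A})} J(\eta) = J(g \circ f) \le \max_{g'\colon \mathcal{Z} \to \Delta(\mathcal{A})} J(g' \circ f),
    \nonumber
\end{align}
since $g$ is one admissible choice of $g'$ in the right-hand maximum.

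Combining these two inequalities gives equality, which is exactly the defining condition of sufficiency in \autoref{defsufficiency}, completing the proof. There is no real obstacle here: the corollary is essentially a direct unpacking of the definition, and the only subtlety worth stating explicitly is that $\{g' \circ f : g' \colon \mathcal{Z} \to \Delta(\mathcal{A})\} \subseteq \Eta$, which justifies both inequalities.
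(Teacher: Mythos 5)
Your proof is correct: the two-sided inequality $J(g \circ f) \le \max_{g'} J(g' \circ f) \le \max_{\eta} J(\eta)$ combined with the optimality of $g \circ f$ forces the equality in \autoref{defsufficiency}, and the only point worth making explicit --- that $\{g' \circ f\} \subseteq \Eta$ --- is the one you make. The paper states this corollary without proof precisely because it is this direct unpacking of the definition, so your argument coincides with the intended one.
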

In this work, we focus on learning recurrent policies, i.e., policies $\eta = g \circ f$ for which the statistic $f$ is recurrent.
Formally, we have,
\begin{align}
    \eta(a | h) &= g(a | f(h)), \ \forall (h, a),\label{eq:decomposition} \\
    f(h') &= u(f(h), a, o'), \ \forall h' = (h, a, o'). \label{eq:recurrent}
\end{align}
This enables the history to be processed iteratively each time that an action is taken and an observation is received.
According to \autoref{corsufficiency}, when learning a recurrent policy $\eta = g \circ f$, the objective can be broken down into two problems: finding a sufficient statistic $f$ and an optimal distribution $g$,
\begin{align}
    \max_{
        \substack{
            f\colon \mathcal{H} \rightarrow \mathcal{Z} \\
            g\colon \mathcal{Z} \rightarrow \Delta(\mathcal{A})
        }
    } J(g \circ f).
    \label{eq:objective}
\end{align}

\subsection{Learning recurrent sufficient statistics} \label{subsec:learning}

Below, we provide a sufficient condition for a statistic to be sufficient for the optimal control.
\begin{restatable}[Sufficiency of recurrent predictive sufficient statistics]{theorem}{thsufficiency} \label{th:sufficiency}
    In an informed POMDP $\wt{\mathcal{P}}$, a statistic $f\colon \mathcal{H} \rightarrow \mathcal{Z}$ is sufficient for the optimal control if it is (i)~recurrent and (ii)~predictive sufficient for the reward and next information given the action,
    \begin{align}
        \text{\normalfont(i)} \; & f(h') = u(f(h), a, o'), \ \forall h' = (h, a, o'), \label{eq:recurrent_criterion} \\
        \text{\normalfont(ii)} \; & p(r, i' | h, a) = p(r, i' | f(h), a), \  \forall (h, a, r, i'). \label{eq:sufficiency_criterion}
    \end{align}
\end{restatable}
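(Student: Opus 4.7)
The plan is to reduce condition (ii) -- predictive sufficiency for the reward and next \emph{information} -- to predictive sufficiency for the reward and next \emph{observation}, from which sufficiency for the optimal control is already known (e.g., \citet{subramanian2022approximate}, to which the theorem implicitly appeals).

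The key ingredient is the conditional independence baked into the informed POMDP: since $o'$ is produced from $s'$ only through $i'$ (the Bayesian network $s \to i \to o$ in \autoref{fig:ipomdp}), together with the Markov property of the underlying MDP, one obtains $p(o' \mid h, a, r, i') = \widetilde{O}(o' \mid i')$. Marginalizing $i'$ out of the joint then gives
\begin{align*}
    p(r, o' \mid h, a) &= \int_{\mathcal{I}} \widetilde{O}(o' \mid i') \, p(r, i' \mid h, a) \diff i' \\
    &= \int_{\mathcal{I}} \widetilde{O}(o' \mid i') \, p(r, i' \mid f(h), a) \diff i' \\
    &= p(r, o' \mid f(h), a),
\end{align*}
where the middle line applies assumption~(ii) inside the integral. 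Hence $f$ is predictive sufficient for $(r, o')$ given $(h, a)$.

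Combined with recurrence (i), the standard result for POMDPs then yields sufficiency for the optimal control, finishing the proof. If a self-contained argument is preferred over invoking \citet{subramanian2022approximate}, one can proceed by induction on the horizon: by recurrence, any rollout $(r_0, o_1, r_1, o_2, \dots)$ under a policy $g \circ f$ can be simulated starting from $f(h)$ using only the latent transition kernel $p(r, o' \mid f(h), a)$, so the return and the Bellman optimality operator collapse to a recursion on $\mathcal{Z}$, and the maximum in \eqref{eq:sufficient} is attained by some $g$.

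The main obstacle is the first step: establishing $p(o' \mid h, a, r, i') = \widetilde{O}(o' \mid i')$ rigorously, since $h$ contains past observations whose dependence on $i'$ must be ruled out. This follows from the Bayesian network together with the Markov property of the state process, but it needs to be written out via a d-separation / factorization argument rather than asserted. Once this conditional independence is in hand, the remainder of the proof is a one-line marginalization followed by the known reduction from predictive sufficiency to sufficiency for the optimal control.
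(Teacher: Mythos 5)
Your proposal is correct and follows essentially the same route as the paper's proof: both reduce condition~(ii) to predictive sufficiency for the reward and next \emph{observation} via the marginalization $p(r, o' \mid \cdot, a) = \int_{\mathcal{I}} p(o' \mid i')\, p(r, i' \mid \cdot, a)\, \diff i'$, justified by the conditional independence $o' \perp (h, r, a) \mid i'$ from the Bayesian network, and then invoke the result of \citet{subramanian2022approximate} that a recurrent statistic predictive sufficient for $(r, o')$ is sufficient for the optimal control. The paper writes the same chain of equalities in the reverse direction and, like you, asserts the conditional independence directly from the Markov blanket of $o'$ rather than spelling out a d-separation argument.
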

The proof for this theorem is in \autoref{app:sufficiency}, generalizing earlier work by \citet{subramanian2022approximate}.

Now, let us consider a distribution over the histories and actions whose density function is denoted as $p(h, a)$.
For example, we consider the stationary distribution induced by the current policy $\eta$ in the informed POMDP $\wt{\mathcal{P}}$.
Let us also assume that the density function $p(h, a)$ is non-zero everywhere.
As shown in \autoref{app:objective}, under mild assumptions, any statistic $f$ satisfying the objective
\begin{align}
    & \max_{
        \substack{
            f\colon \mathcal{H} \rightarrow \mathcal{Z} \\
            q\colon \mathcal{Z} \times \mathcal{A} \rightarrow \Delta(\RR \times \mathcal{I})
        }
    } \expect_{p(h, a, r, i')} \log q(r, i' | f(h), a)
    \label{eq:variational_objective}
\end{align}
also satisfies (ii).
This variational objective jointly optimizes the statistic function $f\colon \mathcal{H} \rightarrow \mathcal{Z}$ with a conditional probability density function $q\colon \mathcal{Z} \times \mathcal{A} \rightarrow \Delta(\mathbb{R} \times \mathcal{I})$.
According to \autoref{th:sufficiency}, a statistic that is recurrent and that satisfies objective \eqref{eq:variational_objective} is sufficient for the optimal control.

In practice, both the recurrent statistic and the density function are implemented with neural networks $f_\theta$ and $q_\theta$ respectively, both parametrized by $\theta \in \RR^d$.
In this case, the objective can be maximized by stochastic gradient ascent.
Regarding the statistic function $f_\theta$, it is implicitly implemented by the update function $z_t = u_\theta(z_{t-1}; x_t)$ of an RNN.
The inputs are $x_t = (a_{t-1}, o_t)$, with $a_{-1}$ the null action that is typically set to zero.
The hidden state of the RNN $z_t = f_\theta(h_t)$ is thus a statistic of the history that is recurrently updated using $u_\theta$.
Regarding $q_\theta$, it is implemented by a parametrized probability density function estimator.
In such a context, we obtain the objective
\begin{align}
    & \max_\theta \underbrace{\expect_{p(h, a, r, i')} \log q_\theta(r, i' | f_\theta(h), a)}_{ L(f_\theta)}.
    \label{eq:neural_variational_objective}
\end{align}

We might wonder whether this informed objective is better than the classic objective, where $i = o$.
In this work, we hypothesize that approximating the information distribution instead of the observation distribution is a better objective in practice.
This is motivated by the data processing inequality applied to the Bayesian network $s' \longrightarrow i' \longrightarrow o'$, which concludes that the information $i'$ is more informative than the observation $o'$ about the Markovian state $s'$ of the environment,
\begin{align}
    I(s', i' | h, a) \geq I(s', o' | h, a), \label{eq:informativeness}
\end{align}
where $I$ denotes the conditional mutual information.
We thus expect the statistic $f_\theta(h)$ to converge faster towards a sufficient statistic, and the policy to converge faster towards an optimal policy.
It is however important to note that the information $i$ might contain irrelevant state variables.
In practice, the conditional distribution $p(i' | h, a)$ may thus be much more difficult to approximate than $p(o' | h, a)$, while not being much more useful to the control task.
While we consider this study out of the scope of this work, ensuring that the sufficient representations of the histories are also necessary for the control task is a promising avenue for future work.

\subsection{Optimal control with recurrent sufficient statistics} \label{subsec:joint}

As seen from \autoref{corsufficiency}, sufficient statistics are needed for the optimal control of POMDPs.
Moreover, as we focus on recurrent policies implemented with RNNs, we can exploit objective \eqref{eq:neural_variational_objective} to learn a sufficient statistic $f_\theta$.
In practice, we jointly maximize the RL objective $J(\eta_{\theta, \phi}) = J(g_\phi \circ f_\theta)$ and the statistic objective $L(f_\theta)$.
This enables one to use the information $i$ to guide the statistic learning through $L(f_\theta)$.
This joint maximization results in the objective
\begin{align}
    & \max_{\theta, \phi} J(g_\phi \circ f_\theta) + L(f_\theta).
    \label{eq:joint_objective}
\end{align}
Note that a policy maximizing \eqref{eq:joint_objective} also maximizes the return $J(g_\phi \circ f_\theta)$ if $f_\theta$ and $q_\theta$ are expressive enough, such that this objective provides optimal policies in the sense of objective \eqref{eq:objective}.

\section{Model-based RL through informed world models} \label{sec:model}

Model-based RL focuses on learning a model of the dynamics $p(r, o' | h, a)$ of the environment, known as a world model, that is exploited to derive a near-optimal policy.
Since the approximate model usually allows one to generate trajectories, many works derive a near-optimal policy by online planning (e.g., model-predictive control) or by optimizing a parametrized policy based on these trajectories \citep{sutton1991dyna, ha2018recurrent, chua2018deep, zhang2019solar, hafner2019learning, hafner2020dream}.
In this section, we first modify the model $q_\theta(r, i' | f_\theta(h), a)$ in order to get a world model from which trajectories can be sampled.
We then adapt the DreamerV3 \citep{hafner2023mastering} algorithm using this world model, resulting in the Informed Dreamer algorithm.

\subsection{Informed world model} \label{subsec:informed_world_model}

We implement the informed world model with a variational RNN (VRNN) as introduced by \citet{chung2015recurrent}, also known as a recurrent state-space model (RSSM) in the RL context \citep{hafner2019learning}.
It is worth noticing that such a model performs its recurrent update using a latent stochastic representation of the observation.
When generating trajectories, it also samples latent representations of the observations without explicitly reconstructing them, which we refer to as latent trajectories.
This key design choice enables the sampling of trajectories without explicitly learning the observation distribution, but the reward and information distribution only. Formally, we have,
\begin{align}
    \hspace{3cm} \hat e &\sim q_\theta^p(\cdot | z, a), \owntag[vrnn_prior]{prior}
    \\
    \hspace{3cm} \hat r &\sim q_\theta^r(\cdot | z, \hat e), \owntag[vrnn_reward]{reward decoder}
    \\
    \hspace{3cm} \hat i' &\sim q_\theta^i(\cdot | z, \hat e), \owntag[vrnn_information]{information decoder}
\intertext{where $\hat e$ is the latent variable of the VRNN when generating trajectories.
    The prior $q_\theta^p$ and the decoders $q_\theta^i$ and $q_\theta^r$ are jointly trained with the encoder,}
    \hspace{3cm} e &\sim q_\theta^e(\cdot | z, a, o'), \owntag[vrnn_posterior]{encoder}
\intertext{to maximize the likelihood of reward and next information samples.
The latent representation $e \sim q_\theta^e(\cdot | z, a, o')$ of the next observation $o'$ can be used to update the statistic to $z'$,}
    \hspace{3cm} z' &= u_\theta(z, a, e). \owntag[vrnn_recurrence]{recurrence}
\end{align}
Note that the statistic $z$ is no longer deterministically updated to $z'$ given $a$ and $o'$, instead we have $z \sim f_\theta(\cdot | h)$, which is induced by $u_\theta$ and $q_\theta^e$.
In practice, we maximize the evidence lower bound (ELBO), a variational lower bound on the likelihood of reward and next information samples given the statistic \citep{chung2015recurrent},
\begin{align}
    \expect_{\substack{p(h, a, r, i') \\ f_\theta(z | h)}}
        \log q_\theta(r, i' | z, a) \geq \expect_{\substack{p(h, a, r, i', o') \\ f_\theta(z | h)}} \bigg[
        & \expect_{q_\theta^e(e | z, a, o')} \big[
            \label{eq:vrnn_informed_objective}
            \log q_\theta^i(i' | z, e)
            + \log q_\theta^r(r | z, e) \big]
            \nonumber
            \\[-3mm]
            &
            - \operatorname{KL}\left(q_\theta^e(\cdot | z, a, o') \parallel q_\theta^p(\cdot | z, a)\right) \bigg].
\end{align}
As illustrated in \autoref{fig:training} for a trajectory sampled in the informed POMDP, the ELBO objective maximizes the conditional log-likelihood $q_\theta^r(r| z, e)$ and $q_\theta^i(i | z, e)$ of $r$ and $i'$ for a sample of the encoder $e \sim q_\theta^e(\cdot | z, a, o')$, and minimizes the KL divergence from $q_\theta^e(\cdot | z, a, o')$ to the prior distribution $q_\theta^p(\cdot | z, a)$.
Note that when $i = o$, it corresponds to Dreamer's world model and learning objective.

\begin{figure}[h]
    \centering
    \vspace{-1ex}
    \begin{tikzpicture}[font=\scriptsize]
        \input{tikz/styles.tex}
        \begin{scope}[transparency group, opacity=0.4]
            \draw[zone, opacity=0.30] (-0.4, -0.70) rectangle (4.4, -2.35) ;
\draw[zone, opacity=0.40] (-0.3, -1.75) rectangle (4.3, -2.25) ;

\input{tikz/trajectory}

\node[state] (s0) at (0, 0) {} ; \node[label] (s0t) at (s0) {$s$} ;
\node[state] (s1) at (2, 0) {} ; \node[label] (s1t) at (s1) {$s$} ;
\node[state] (s2) at (4, 0) {} ; \node[label] (s2t) at (s2) {$s$} ;

\draw[arrow] (s0) to (i0) ;
\draw[arrow] (i0) to (o0) ;

\draw[arrow] (s0) to (r0) ; \draw[arrow] (a0) to (r0) ;
\draw[arrow] (s0) to (s1) ; \draw[arrow] (a0) to (s1) ;

\draw[arrow] (s1) to (i1) ;
\draw[arrow] (i1) to (o1) ;

\draw[arrow] (s1) to (r1) ; \draw[arrow] (a1) to (r1) ;
\draw[arrow] (s1) to (s2) ; \draw[arrow] (a1) to (s2) ;

\draw[arrow] (s2) to (i2) ;
\draw[arrow] (i2) to (o2) ;
        \end{scope}
        \input{tikz/initial.tex}
        \input{tikz/recurrence.tex}
        \def\bend{33}
        \def\pos{-4.1}
        \def\below{25}
        \input{tikz/prior.tex}
        \node[encoded] (e-1)   at (-1.0, -3.1) {} ; \node[label] (e-1t) at (e-1) {$e$} ;
\node[encoded] (e0)    at ( 1.0, -3.1) {} ; \node[label] (e0t) at (e0) {$e$} ;
\node[encoded] (e1)    at ( 3.0, -3.1) {} ; \node[label] (e1t) at (e1) {$e$} ;

\draw[arrow] (z-1) to (e-1) ; \draw[arrow] (a-1) to (e-1) ; \draw[arrow] (o0) to (e-1) ;
\draw[arrow] (z0) to (e0) ; \draw[arrow] (a0) to (e0) ; \draw[arrow] (o1) to (e0) ;
\draw[arrow] (z1) to (e1) ; \draw[arrow] (a1) to (e1) ; \draw[arrow] (o2) to (e1) ;

\draw[arrow] (z-1) to[bend right=25] (z0) ; \draw[arrow] (e-1) to (z0) ; \draw[arrow] (a-1) to (z0) ;
\draw[arrow] (z0) to[bend right=25] (z1) ; \draw[arrow] (e0) to (z1) ; \draw[arrow] (a0) to (z1) ;
\draw[arrow] (z1) to[bend right=25] (z2) ; \draw[arrow] (e1) to (z2) ; \draw[arrow] (a1) to (z2) ;

\node[distribution, above left=-0.10cm of e-1] {$q_\theta^e$} ;
        \input{tikz/trajectory.tex}
        \node[reward] (r-1) at (-1, -1) {} ; \node[label] (r-1t) at (r-1) {$/$} ;

\draw[loss] (e-1) to[bended] node[pos=0.75, left, align=right] {$q_\theta^r$} (r-1) ;
\draw[loss] (e0) to[bended] (r0) ;
\draw[loss] (e1) to[bended] (r1) ;

\draw[loss] (e-1) to node[pos=0.75, left, align=right] {$q_\theta^i$} (i0) ;
\draw[loss] (e0) to (i1) ;
\draw[loss] (e1) to (i2) ;

\draw[loss] (e-1) to node[pos=0.50, left=-0.1cm, align=right] {\tiny $- \operatorname{KL}$} (p-1) ;
\draw[loss] (e0) to (p0) ;
\draw[loss] (e1) to (p1) ;
        \node[distribution, right=-0.1cm of p-1] {$q_\theta^p$} ;
    \end{tikzpicture}
    \vspace{-1ex}
    \caption{%
        VRNN loss for a given trajectory at training time. Dependence of $q_\theta^r$ and $q_\theta^i$ on $z$ is omitted.
    }
    \vspace{-0.5ex}
    \label{fig:training}
\end{figure}

As can be noticed from \autoref{eq:vrnn_informed_objective} and \autoref{fig:training}, the encoder is conditioned on the observation and not on the information.
While this is required for the encoder to be used at execution time, it certainly loosen the lower bound and limits the quality of the conditional information distribution that can be learned.
Future work may improve the quality of the information reconstruction by considering an additional information encoder, also conditioned on the statistic of the history, whose samples are not used in the recurrence.

\subsection{Informed Dreamer} \label{subsec:informed_dreamer}

As explained above, while our informed world model does not learn the observation distribution, it is still able to sample latent trajectories.
Indeed, the VRNN only uses the latent representation $e \sim q_\theta^e(\cdot | z, a, o')$ of the observation $o'$, trained to reconstruct the information $i'$, in order to update $z$ to $z'$.
Consequently, we can use the prior distribution $\hat e \sim q_\theta^p(\cdot | z, a)$, trained according to \eqref{eq:vrnn_informed_objective} to minimize the KL divergence from $e \sim q_\theta^p(\cdot | z, a, o')$ in expectation, to sample latent trajectories.

The Informed Dreamer algorithm leverages such trajectories to learn a latent critic $v_\psi(z)$ and a latent policy $a \sim g_\phi(\cdot | z)$.
\autoref{fig:imagination} illustrates the generation of a latent trajectory, along with estimated rewards $\hat r \sim q_\theta^r(\cdot | z, e)$ and values $\hat v = v_\psi(z)$.
The actions are sampled according to the latent policy, and any RL algorithm can be used to maximize the estimated return.
Moreover, note that the estimated return is given by a function that is differentiable with respect to $\phi$, and it can be directly maximized by stochastic gradient ascent.
In the experiments, we use an actor-critic approach for discrete actions and direct maximization for continuous actions, following DreamerV3 \citep{hafner2023mastering}.
Finally, as shown in \autoref{fig:execution}, when deployed in the execution POMDP, the encoder $q_\theta^e$ is used to compute the latent representations of the observations and to update the statistic.
The actions are then selected according to $a \sim g_\phi(\cdot | z)$.

\begin{figure}[h]

    \centering
    \vspace{-1ex}
    \begin{subfigure}[b]{0.49\linewidth}
        \begin{tikzpicture}[font=\scriptsize]
            \input{tikz/styles.tex}
            \input{tikz/initial.tex}
            \input{tikz/recurrence.tex}
            \def\bend{0}
            \def\pos{-3.1}
            \def\below{25}
            \input{tikz/prior.tex}
            \input{tikz/policy.tex}
            \input{tikz/imagined.tex}
            \node[distribution, below right=-0.27cm of p-1] {$q_\theta^p$} ;
        \end{tikzpicture}
        \vspace{-0.5ex}
        \caption{Imagination of a trajectory using policy $g_\phi$ with estimated rewards and values. Dependence of $q_\theta^r$ and $v_\psi$ on $z$ is omitted.}
        \label{fig:imagination}
    \end{subfigure}
    \begin{subfigure}[b]{0.49\linewidth}
        \begin{tikzpicture}[font=\scriptsize]
            \input{tikz/styles.tex}
            \input{tikz/initial.tex}
            \input{tikz/recurrence.tex}
            \node[encoded] (e-1)   at (-1.0, -3.1) {} ; \node[label] (e-1t) at (e-1) {$e$} ;
\node[encoded] (e0)    at ( 1.0, -3.1) {} ; \node[label] (e0t) at (e0) {$e$} ;
\node[encoded] (e1)    at ( 3.0, -3.1) {} ; \node[label] (e1t) at (e1) {$e$} ;

\draw[arrow] (z-1) to (e-1) ; \draw[arrow] (a-1) to (e-1) ; \draw[arrow] (o0) to (e-1) ;
\draw[arrow] (z0) to (e0) ; \draw[arrow] (a0) to (e0) ; \draw[arrow] (o1) to (e0) ;
\draw[arrow] (z1) to (e1) ; \draw[arrow] (a1) to (e1) ; \draw[arrow] (o2) to (e1) ;

\draw[arrow] (z-1) to[bend right=25] (z0) ; \draw[arrow] (e-1) to (z0) ; \draw[arrow] (a-1) to (z0) ;
\draw[arrow] (z0) to[bend right=25] (z1) ; \draw[arrow] (e0) to (z1) ; \draw[arrow] (a0) to (z1) ;
\draw[arrow] (z1) to[bend right=25] (z2) ; \draw[arrow] (e1) to (z2) ; \draw[arrow] (a1) to (z2) ;

\node[distribution, above left=-0.10cm of e-1] {$q_\theta^e$} ;
            \input{tikz/history.tex}
            \input{tikz/policy.tex}
        \end{tikzpicture}
        \vspace{-0.5ex}
        \caption{Execution of the policy on a trajectory of the POMDP using the encoder $q_\theta^e$ to condition the latent policy $g_\phi$.}
        \label{fig:execution}
    \end{subfigure}
    \vspace{-1ex}
    \caption{%
        Bayesian graph of a VRNN evaluation during imagination and execution.
    }
    \vspace{-0.5ex}
    \label{fig:imagination_and_execution}
\end{figure}

A pseudocode for the adaptation of the DreamerV3 algorithm using this informed world model is given in \autoref{app:dreamer}.
We also detail some divergences of our formalization with respect to the original DreamerV3 algorithm.
As in DreamerV3, we use symlog predictions, a discrete VAE, KL balancing, free bits, reward normalisation, a distributional critic, and entropy regularization.

\section{Experiments} \label{sec:experiments}

In this section, we compare Dreamer to the Informed Dreamer on several informed POMDPs, all considered with a discount factor of $\gamma = 0.997$.
For reproducibility purposes, we use the implementation and hyperparameters of DreamerV3 released by the authors at \href{https://github.com/danijar/dreamerv3}{github.com/danijar/dreamerv3}, and release our adaptation to informed POMDPs using the same hyperparameters at \href{https://github.com/glambrechts/informed-dreamer}{github.com/glambrechts/informed-dreamer}.

\subsection{Varying Mountain Hike} \label{subsec:hike}

In the Varying Mountain Hike environments, the agent should walk throughout a mountainous terrain to reach the mountain top as fast as possible while avoiding the valleys.
There exists four versions of this environment, depending on the agent orientation (\emph{north} or \emph{random}) and on the observation that is available (\emph{position} or \emph{altitude}).
More formally, the agent has a position $x$ and a fixed orientation $c$ in each episode.
The orientation $c$ is either always \emph{north} or a \emph{random} cardinal orientation, depending on the environment version.
It can take four actions to move relative to its orientation (right, forward, left and backward).
The orientation is not observed by the agent, but it receives a Gaussian observation of its \emph{position}, or its \emph{altitude}, depending on the environment version ($\sigma_o = 0.1$ in both cases).
The reward is given by its altitude relative to the mountain top, such that the goal of the agent is to obtain the highest cumulative altitude.
Around the mountain top, states are terminal and the trajectories are truncated at $t = 160$ in practice.
We refer the reader to \citet{lambrechts2022recurrent} for a formal description of these environments, strongly inspired by the Mountain Hike of \citet{igl2018deep}.

For this environment, we first consider the position and orientation to be available as additional information at training time.
In other words, we consider the state-informed POMDP with $i = s$.
As can be seen in \autoref{fig:hike}, the speed of convergence of the policies is improved in all four environments when using the Informed Dreamer.
Moreover, as shown in \autoref{tab:hike} in \autoref{app:table}, the final performance of the Informed Dreamer is better in 3 out of 4 environments.

\begin{figure}[ht]
    \centering
    \vspace{-1ex}
    \begin{subfigure}[b]{0.40\linewidth}
        \includegraphics[width=\linewidth]{{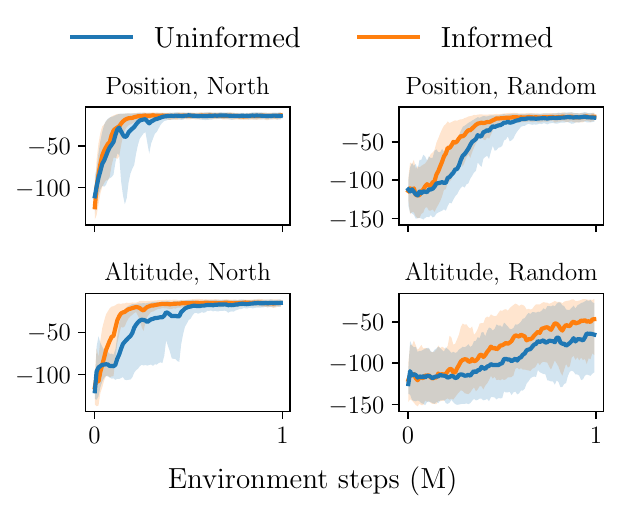}}
        \vspace{-3ex}
        \caption{%
            \scriptsize Uninformed Dreamer and Informed Dreamer with $i = s$ in the four environments.
        }
        \label{fig:hike}
    \end{subfigure}
    \hspace{0.02\linewidth}
    \begin{subfigure}[b]{0.40\linewidth}
        \includegraphics[width=\linewidth]{{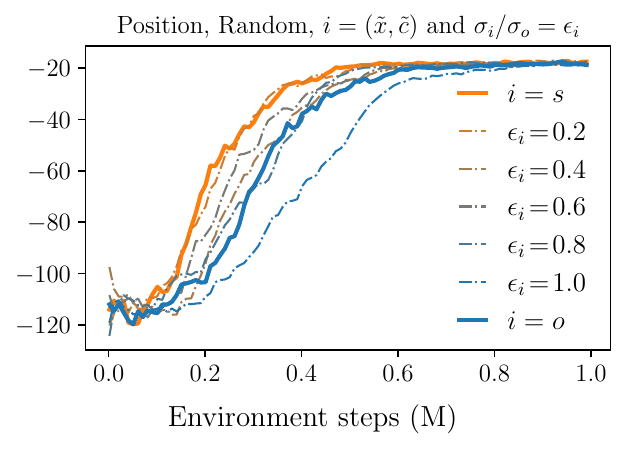}}
        \vspace{-3ex}
        \caption{%
            \scriptsize Informed Dreamer with $i = (\tilde x, \tilde c)$ with position observation and random orientation.
        }
        \label{fig:quality}
    \end{subfigure}
    \vspace{-1ex}
    \caption{%
        Varying Mountain Hike environments: minimum, maximum and average returns over five trainings.
    }
    \vspace{-0.5ex}
\end{figure}

We also experiment with other types of information in the Varying Mountain Hike with position observation and random orientation.
More precisely, we consider an information $i = (\tilde x, \tilde c)$ about the state $s = (x, c)$, where $\tilde x$ is an observation of the position $x$ with Gaussian noise of standard deviation $\sigma_i \in [0, \sigma_o]$, and $\tilde c$ is a noisy observation of the orientation $c$ replaced by a random orientation with probability $\epsilon_i \in [0, 1]$.
Note that when $\sigma_i = 0$, the position $x$ is encoded in the information, while when $\sigma_i = \sigma_o$, the observation $o$ is encoded in the information.
As shown in \autoref{fig:quality}, without confidence intervals for the sake readability, the better the information, the faster the policy converges.
It supports the idea that the more information about the state is exploited, the faster an optimal policy for the POMDP is learned.
Moreover, we observe that the Informed Dreamer with $\epsilon_i = 1$ and $\sigma_i = 0.1$ performs even worse than the Uninformed Dreamer.
It suggests that considering additional information that is not informative about the state (i.e., $I(s, i | o) = 0$), such as $\tilde c$ with $\epsilon_i = 1$, can degrade learning.
Similar results are obtained for the other three environments in \autoref{subsec:qualities}.

\subsection{Velocity Control} \label{subsec:velocity}

In the Velocity Control environments, we consider the standard DeepMind Control tasks \citep{tassa2018deepmind}, where only the joints velocities are available as observations and not their absolute positions, which is a standard benchmark in the partially observable RL literature \citep{han2019variational, lee2020stochastic}.
These environments consists of controlling different multi-joints robots to achieve several tasks.
We consider the absolute positions to be available at training time along with the velocities, which results in a Markovian information $i = s$.

\begin{figure}[ht]
    \centering
    \vspace{-1ex}
    \includegraphics[width=\linewidth]{{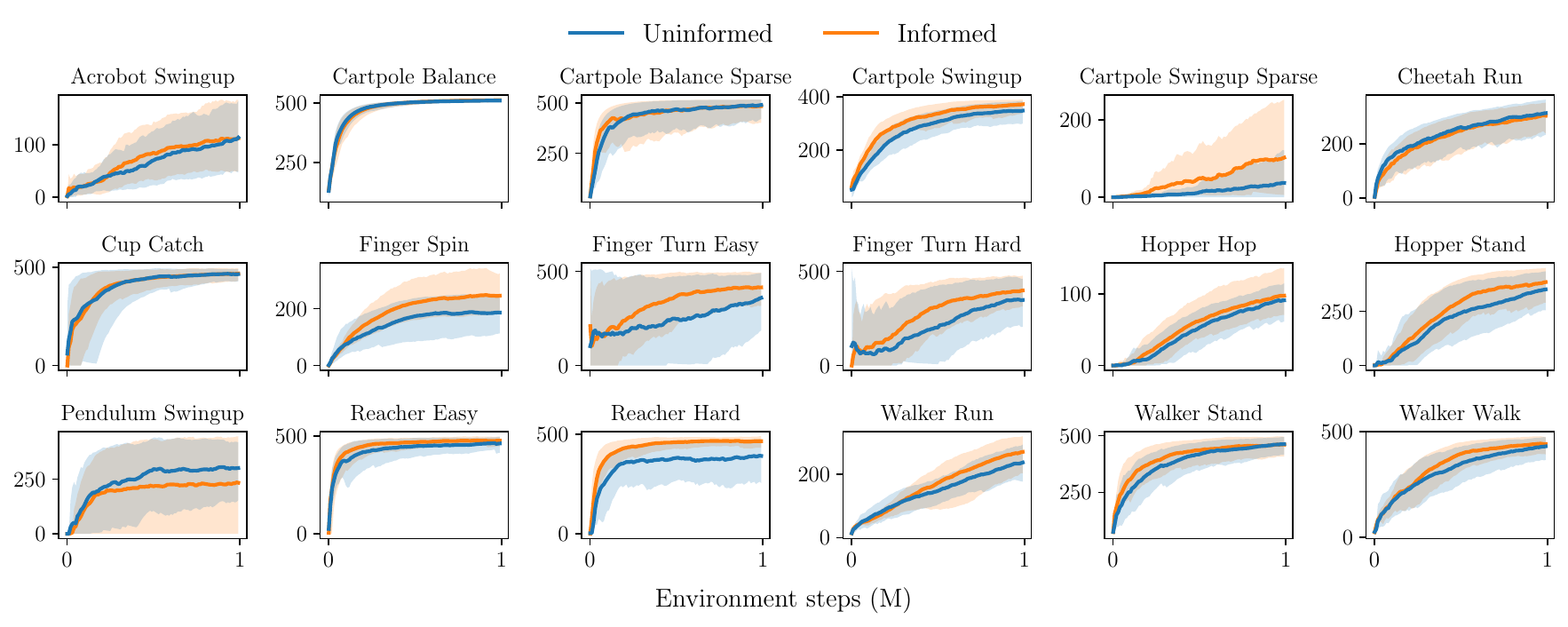}}
    \vspace{-4ex}
    \caption{
        Uninformed Dreamer and Informed Dreamer with $i = s$ in the Velocity Control environments: minimum, maximum and average returns over five trainings.
    }
    \label{fig:velocity}
    \vspace{-0.5ex}
\end{figure}

\autoref{fig:velocity} shows that the convergence speed of the policies is improved in this benchmark, for nearly all of the considered games.
Moreover, the final returns are given in \autoref{tab:velocity} in \autoref{app:table}, and show that policies obtained after one million time steps are better in 13 out of 18 environments when considering additional information.

\subsection{Pop Gym} \label{subsec:pop}

The Pop Gym environments have been specifically designed to benchmark the ability of handling partial observability \citep{morad2023popgym}.
The latter notably includes memory games, board games, or control problems involving partial observability and noise.
For these environments, we consider the state to be available as additional information.

\begin{figure}[ht]
    \centering
    \vspace{-1ex}
    \includegraphics[width=0.86\linewidth]{{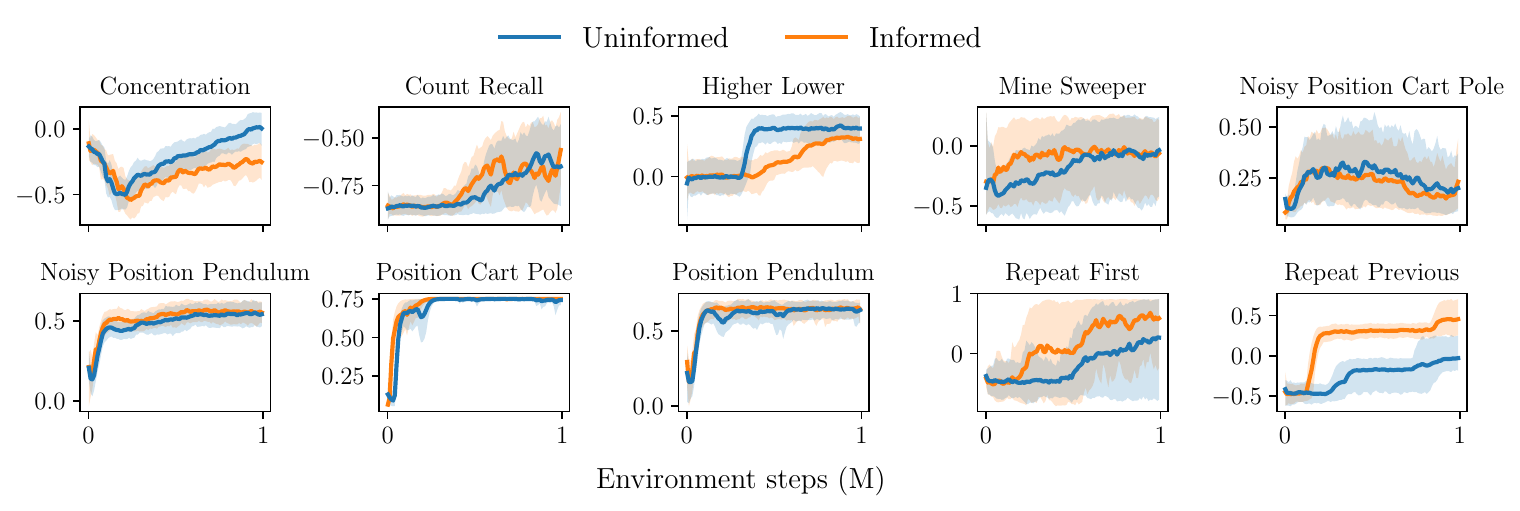}}
    \vspace{-2ex}
    \caption{
        Uninformed Dreamer and Informed Dreamer with $i = s$ in the Pop Gym environments: minimum, maximum and average returns over five trainings.
    }
    \label{fig:pop}
    \vspace{-0.5ex}
\end{figure}

\autoref{fig:pop} shows that learning in those POMDPs sometimes benefits from the exploitation of additional information as proposed in the Informed Dreamer.
The learning of the Informed Dreamer seems to suffer from the approximation of the information distribution in 2 out of those 10 environments (Concentration and Higher Lower).
The final returns are given in \autoref{tab:pop} in \autoref{app:table}, showing a better final performance in 7 out of 10 environments, even though returns have a high variability.
In particular, we observe that the Informed Dreamer converges to a significantly higher return for the Repeat First and Repeat Previous environments, that both require discovering long time dependencies.
The exploitation of additional information seems crucial in these environments, and we study this in depth on harder instances of the Repeat Previous environment in \autoref{subsec:hard}.
This analysis shows that the Informed Dreamer can learn near-optimal policies in environments for which the Uninformed Dreamer does not learn at all.

\subsection{Flickering Atari and Flickering Control}

While arguably not constituting a relevant benchmark for measuring the ability of handling partial observability \citep{shao2022mask, avalos2024wasserstein}, the Flickering Atari and Flickering Control environments have become standard benchmarks in the partially observable RL literature \citep{hausknecht2015deep, zhu2017improving, igl2018deep, ma2020discriminative}.
For completeness, the results for these environments are reported in \autoref{app:flickering}.
We observe that the speed of convergence and final performance of the agent is sometimes greatly improved when considering additional information (e.g., Asteroids, Pong, Breakout).
However, we also observe that the performance is lower in some environments.
As far as the Flickering Atari environments are concerned, the Informed Dreamer only outperforms Dreamer in 6 out of 12 environments.
In the Flickering Control environments, the Informed Dreamer tends to systematically underperform the Uninformed Dreamer, attaining a better performance in only 2 out of 18 environments.
It suggests that additional state information is not useful for these tasks.
We furthermore hypothesize that the conditional information distribution is difficult to approximate, which may cause learning to degrade.
It shows that not all information is worth exploiting, particularly when the level of uncertainty due to partial observability is low.

\section{Conclusion} \label{sec:conclusion}

In this work, we introduced a new formalization for considering additional information available at training time for POMDP, called the informed POMDP.
In this context, we proposed a learning objective and proved that it provides sufficient statistic for the optimal control.
Next, we adapted this objective to provide an environment model from which latent trajectories can be sampled.
We then adapted a successful model-based RL algorithm, known as Dreamer, with this informed world model, resulting in the Informed Dreamer algorithm.
By considering several environments from the partially observable RL literature, we showed that this informed learning objective often improves the convergence speed and quality of the policies.
This work also presents several limitations.
First, a formal justification for the use of the information instead of the observation is still lacking.
Future work may consider the notion of approximate information states to bound the suboptimality of the policy for a given error on the information distribution instead of the observation distribution.
Second, we observed that this informed objective hurts performance in some environments, motivating further work in which particular attention is paid to the design of the information.
It would be worth drawing connection to the exogenous RL literature that complements this work by focusing on discarding irrelevant information.
Third, the proposed ELBO learning objective is probably a loose lower bound on the information likelihood. Future work may improve the quality of the information distribution by considering informed world models with a dedicated information encoder.

\subsubsection*{Acknowledgements}

The authors would like to thank our colleagues Pascal Leroy, Arnaud Delaunoy, Renaud Vandeghen and Florent De Geeter for their valuable comments on this manuscript.
Gaspard Lambrechts gratefully acknowledges the financial support of the \emph{Wallonia-Brussels Federation} for his FRIA grant.
Adrien Bolland gratefully acknowledges the financial support of the \emph{Wallonia-Brussels Federation} for his FNRS grant.
Computational resources have been provided by the \emph{Consortium des Équipements de Calcul Intensif} (CÉCI), funded by the \emph{National Fund for Scientific Research} (F.R.S.-FNRS) under Grant No. 2502011 and by the \emph{Walloon Region}, including the Tier-1 supercomputer of the \emph{Wallonia-Brussels Federation}, infrastructure funded by the \emph{Walloon Region} under Grant No. 1117545.

\bibliographystyle{rlc}
\bibliography{references.bib}

\newpage
\appendix
\onecolumn

\section{Sufficiency of recurrent predictive sufficient statistics} \label{app:sufficiency}

In this section, we prove \autoref{th:sufficiency}, that is recalled below.
\thsufficiency*
\begin{proof}
    From Proposition~4 and Theorem~5 by \citet{subramanian2022approximate}, we know that a statistic is sufficient for the optimal control of an execution POMDP if it is (i) recurrent and (ii') predictive sufficient for the reward and next \emph{observation} given the action: $p(r, o' | h, a) = p(r, o' | f(h), a)$.
    Let us consider a statistic $f\colon \mathcal{H} \rightarrow \mathcal{A}$ satisfying (i) and (ii).
    Let us show that it satisfies (ii'). We have,
    \begin{align}
        p(r, o' | f(h), a)
        &= \int_{\mathcal{I}} p(r, o', i' | f(h), a) \diff i' \label{eq:suff1} \\
        &= \int_{\mathcal{I}} p(o' | r, i', f(h), a) p(r, i' | f(h), a) \diff i', \label{eq:suff2}
    \end{align}
    using the law of total probability and the chain rule.
    As can be seen from the informed POMDP formalization of \autoref{sec:informed} and the resulting Bayesian network in \autoref{fig:ipomdp}, the Markov blanket of $o'$ is $\left\{i'\right\}$.
    As a consequence, $o'$ is conditionally independent of any other variable given $i'$.
    In particular, $p(o' | i', r, f(h), a) = p(o | i')$, such that,
    \begin{align}
        p(r, o' | f(h), a)
        &= \int_{\mathcal{I}} p(o' | i') p(r, i' | f(h), a) \diff i'. \label{eq:suff3}
    \end{align}
    From hypothesis (ii), we can write,
    \begin{align}
        p(r, o' | f(h), a)
        &= \int_{\mathcal{I}} p(o' | i') p(r, i' | h, a) \diff i'. \label{eq:suff4}
    \end{align}
    Finally, exploiting the Markov blanket $\left\{i'\right\}$ of $o'$, the chain rule and the law of total probability again, we have,
    \begin{align}
        p(r, o' | f(h), a)
        &= \int_{\mathcal{I}} p(o' | i', r, h, a) p(r, i' | h, a) \diff i' \label{eq:suff5} \\
        &= \int_{\mathcal{I}} p(o', r, i' | h, a) \diff i' \label{eq:suff6} \\
        &= p(r, o' | h, a). \label{eq:end}
    \end{align}
    This proves that (ii) implies (ii').
    As a consequence, any statistic $f$ satisfying (i) and (ii) is a sufficient statistic of the history for the optimal control of the informed POMDP.
\end{proof}

\section{Recurrent sufficient statistic objective} \label{app:objective}

First, let us consider a fixed history $h$ and action $a$. Let us recall that two density functions $p(r, i' | h, a)$ and $p(r, i' | f(h), a)$ are equal almost everywhere if, and only if, their KL divergence is zero,
\begin{align}
    \expect_{p(r, i' | h, a)} \log \frac{p(r, i' | h, a)}{p(r, i' | f(h), a)} = 0.
\end{align}
Now, let us consider a probability density function $p(h, a)$ that is non zero everywhere.
We have that the KL divergence from $p(r, i' | h, a)$ to $p(r, i' | f(h), a)$ is equal to zero for almost every history $h$ and action $a$ if, and only if, it is zero on expectation over $p(h, a)$ since the KL divergence is non-negative,
\begin{align}
    \expect_{p(r, i' | h, a)} \log \frac{p(r, i' | h, a)}{p(r, i' | f(h), a)} \stackrel{\mathclap{\normalfont\mbox{\tiny\phantom{.\@}\!a.e.}}}{=} 0
    \Leftrightarrow &\expect_{p(h, a, r, i')} \log \frac{p(r, i' | h, a)}{p(r, i' | f(h), a)} = 0.
\end{align}
Rearranging, we have that $p(r, i' | h, a)$ is equal to $p(r, i' | f(h), a)$ for almost every $h$, $a$, $r$ and $i'$ if, and only if,
\begin{align}
    \expect_{p(h, a, r, i')} \log p(r, i' | h, a) = \expect_{p(h, a, r, i')} \log p(r, i' | f(h), a).
\end{align}
Now, we recall the data processing inequality, enabling one to write, for any statistic $f'$,
\begin{align}
    \expect_{p(h, a, r, i')} \log p(r, i' | h, a) \geq \expect_{p(h, a, r, i')} \log p(r, i' | f'(h), a).
\end{align}
since $h(r, i' | h, a) = h(r, i' | h, f(h), a) \leq h(r, i' | f(h), a), \ \forall (h, a)$, where $h(x)$ is the differential entropy of random variable $x$.
Assuming that there exists at least one $f\colon \mathcal{H} \rightarrow \mathcal{Z}$ for which the inequality is tight, we obtain the following objective for a predictive sufficient statistic $f$,
\begin{align}
    \max_{f\colon \mathcal{H} \rightarrow \mathcal{Z}} \expect_{p(h, a, r, i')} \log p(r, i' | f(h), a).
    \label{eq:sufficiency_criterion_log_likelihood}
\end{align}
Unfortunately, the probability density $p(r, i' | f(h), a)$ is unknown.
However, knowing that the distribution that maximizes the log-likelihood of samples from $p(r, i' | f(h), a)$ is $p(r, i' | f(h), a)$ itself, we can write,
\begin{align}
    \expect_{p(h, a, r, i')} \log p(r, i' | f(h), a)
    =
    \max_{q\colon \mathcal{Z} \times \mathcal{A} \rightarrow \Delta(\RR \times \mathcal{I})} \expect_{p(h, a, r, i')} \log q(r, i' | f(h), a)
    .
\end{align}
By jointly maximizing the probability density function $q\colon \mathcal{Z} \times \mathcal{A} \rightarrow \Delta(\RR \times \mathcal{I})$, we obtain,
\begin{align}
    \max_{
        \substack{
            f\colon \mathcal{H} \rightarrow \mathcal{Z} \\
            q\colon \mathcal{Z} \times \mathcal{A} \rightarrow \Delta(\RR \times \mathcal{I})
        }
    } \expect_{p(h, a, r, i')} \log q(r, i' | f(h), a).
\end{align}
This objective ensures that the statistic $f(h)$ is predictive sufficient for the reward and next information given the action.
If $f(h)$ is a recurrent statistic, then it is also sufficient for the optimal control, according to \autoref{th:sufficiency}.

\section{Informed Dreamer} \label{app:dreamer}

The Informed Dreamer algorithm is presented in \autoref{algo:dreamer}.
Differences with the Uninformed Dreamer algorithm \citep{hafner2020dream} are highlighted in blue.
In addition, it can be noted that in the original Dreamer algorithm, the statistic $z_{t}$ encodes $h_t = (o_0, a_0, \dots, o_t)$ and $a_t$, instead of $h_t$ only.
As a consequence, the prior distribution $e_{t} \sim q_\theta^p(\cdot | z_{t})$ can be conditioned on the statistic $z_{t}$ only, instead of the statistic and last action.
Similarly, the encoder distribution $e_{t} \sim q_\theta^p(\cdot | z_t, o_{t+1})$ can be conditioned on the statistic $z_t$ only, instead of the statistic and last action.
On the other hand, the latent policy $a_{t+1} \sim g(\cdot | z_t, e_{t})$ should be conditioned on the statistic $z_t$ and the new latent $e_t$ to account for the last observation, and the same is true for the value function $v_\psi(z_t, e_t)$.
In the experiments, we follow the original implementation for both the Uninformed Dreamer and the Informed Dreamer, according to the code that we release at \href{https://github.com/glambrechts/informed-dreamer}{github.com/glambrechts/informed-dreamer}.

Following Dreamer, the algorithm introduces the continuation flag $c_t$, which indicates whether state $s_t$ is terminal.
A terminal state $s_t$ is a state from which the agent can never escape, and in which any further action provides a zero reward.
It follows that the value function of a terminal state is zero, and trajectories can be truncated at terminal states since we do not need to learn their value or the optimal policy in those states.
Alternatively, $c_t$ can be interpreted as an indicator that can be extracted from the observation $o_t$, but we made it explicit in the algorithm.

\begin{algorithm}[p]
    \small
    \caption{Informed Dreamer - direct reward maximization}
    \label{algo:dreamer}
    \begin{algorithmic}
        \STATE {\bfseries Hyperparameters:} Environment steps $S$, steps before training $F$, train ratio $R$, backpropagation horizon $W$, imagination horizon $K$, batch size $N$, replay buffer capacity $B$.
        \vspace{.6ex}
        \STATE Initialize neural network parameters $\theta$, $\phi$, $\psi$ randomly, initialize empty replay buffer $\mathcal{B}$.
        \STATE Let $g = 0$, $t = 0$, $a_{-1} = 0$, $r_{-1} = 0$, $z_{-1} = 0$.
        \STATE Reset the environment and observe $o_0$ and $c_0$ (true at reset).
        \FOR{$s = 0 \dots S-1$}
            \vspace{.6ex}
            \STATE \textcolor{DarkBlue}{// \textit{Environment interaction}}
            \STATE Encode observation $o_t$ to $e_{t-1} \sim q_\theta^e(\cdot | z_{t-1}, a_{t-1}, o_t)$.
            \STATE Update $z_t = u_\theta(z_{t-1}, a_{t-1}, e_{t-1})$.
            \STATE Given the current statistic $z_t$, take action $a_t \sim g_\phi(\cdot | z_t)$.
            \STATE Observe reward $r_t$, \textcolor{DarkBlue}{information} $\color{DarkBlue}i_{t+1}$, observation $o_{t+1}$ and continuation flag $c_{t+1}$.
            \IF{$c_{t+1}$ is false (terminal state)}
                \STATE Reset $t = 0$.
                \STATE Reset the environment and observe $o_0$ and $c_0$ (true at reset).
            \ENDIF
            \STATE Update $t = t + 1$.
            \STATE Add trajectory of last $W$ time steps $(a_{w-1}, r_{w-1}, {\color{DarkBlue}i_w}, o_w, c_w)_{w=t-W+1}^{t}$ to replay buffer $\mathcal{B}$.
            \vspace{.6ex}
            \STATE \textcolor{DarkBlue}{// \textit{Learning}}
            \WHILE{$|\mathcal{B}| \geq F \land g < Rs$}
                \vspace{.6ex}
                \STATE \textcolor{DarkBlue}{// \textit{Environment learning}}
                \STATE Draw $N$ trajectories of length $W$ \smash{$\left\{ (a^n_{w-1}, r^n_{w-1}, {\color{DarkBlue} i^n_w}, o^n_w, c^n_w)_{w = 0}^{W-1} \right\}_{n = 0}^{N - 1}$} uniformly from replay buffer $\mathcal{B}$.
                \STATE Compute statistics and encoded latents
                $$\left\{ (z^n_w, e^n_w)_{w = -1}^{W - 2} \right\}_{n = 0}^{N - 1} = \text{\hyperlink{caption:encode}{Encode}}\left(u_\theta, q_\theta^e, \left\{ (a^n_{w-1}, o^n_w)_{w = 0}^{W-1} \right\}_{n = 0}^{N - 1}\right).$$
                \STATE Update $\theta$ using $\nabla_\theta \sum_{n=0}^{N} \sum_{w=-1}^{W-2} L_w^n$, where $a^n_{-1} = 0$ and,
                    \vspace{-1.5ex}
                    \begin{align*}
                        L_w^n = & \
                            {\color{DarkBlue} \log q_\theta^i(i^n_{w+1} | z^n_{w}, e^n_{w})}
                            + \log q_\theta^c(c^n_{w+1} | z^n_{w}, e^n_{w})
                            + \log q_\theta^r(r^n_{w} | z^n_{w}, e^n_{w}) \\
                            & - \operatorname{KL}\left(q_\theta^e(\cdot | z^n_{w}, a^n_{w}, o^n_{w+1}) \parallel q_\theta^p(\cdot | z^n_{w}, a^n_{w})\right).
                    \end{align*}
                \vspace{-2.5ex}
                \STATE \textcolor{DarkBlue}{// \textit{Behaviour learning}}
                \STATE Sample latent trajectories
                $$\left\{ \left\{ (z^{n,w}_{k}, \hat e^{n,w}_{k})_{k = 0}^{K - 1} \right\}_{w = -1}^{W-2} \right\}_{n = 0}^{N - 1} = \text{\hyperlink{caption:imagine}{Imagine}}\left(u_\theta, q^p_\theta, g_\phi, \left\{ (z^n_w, e^n_w, a^n_w)_{w = - 1}^{W-2} \right\}_{n = 0}^{N - 1}\right).$$
                \STATE Predict rewards $r^{n,w}_{k} \sim q_\theta^r(\cdot | z^{n,w}_{k}, \hat e^{n,w}_{k})$, continuations flags $c^{n,w}_{k+1} \sim q_\theta^c(\cdot | z^{n,w}_{k}, \hat e^{n,w}_{k})$, and values $v^{n,w}_{k} = v_\psi(z^{n,w}_{k})$.
                \STATE Compute value targets using $\lambda$-returns, with $G^{n,w}_{K-1} = v_{K-1}^{n,w}$ and
                \vspace{-1.5ex}
                \begin{align*}
                    G^{n,w}_{k} = r^{n,w}_{k} + \gamma c^{n,w}_{k} \left( (1 - \lambda) v_{k+1}^{n,w} + \lambda G^{n,w}_{k+1} \right).
                \end{align*}
                \vspace{-2.5ex}
                \STATE Update $\phi$ using $\nabla_\phi \sum_{n=0}^{N-1} \sum_{w = -1}^{W - 2} \sum_{k = 0}^{K-1} G^{n,w}_{k}$.
                \STATE Update $\psi$ using $\nabla_\psi \sum_{n=0}^{N-1} \sum_{w = -1}^{W - 2} \sum_{k = 0}^{K-1} \lVert v_\psi(z^{n,w}_{k}) - \operatorname{sg}(G^{n,w}_{k}) \rVert^2$, where $\operatorname{sg}$ is the stop-gradient operator.
                \STATE Count gradient steps $g = g + 1$
            \ENDWHILE
        \ENDFOR
    \end{algorithmic}
\end{algorithm}

\begin{algorithm}[!ht]
    \hypertarget{caption:encode}{}
    \small
    \caption{Encode}
    \label{algo:encode}
    \begin{algorithmic}
        \STATE {\bfseries Inputs:} Update function $u_\theta$, encoder $q_\theta^e$, and histories $\left\{ (a^n_{w-1}, o^n_w)_{w = 0}^{W-1} \right\}_{n = 0}^{N - 1}$.
        \STATE Let $z^n_{-1} = 0$.
        \FOR{$w = 0 \dots W-1$}
            \STATE Let $e^n_{w-1} \sim q_\theta^e(\cdot | z^n_{w-1}, a^n_{w-1}, o^n_{w})$.
            \STATE Let $z^n_w = u_\theta(z^n_{w-1}, a^n_{w-1}, e^n_{w-1})$.
        \ENDFOR
        \STATE {\bfseries Returns:} $\left\{ (z^n_w, e^n_w)_{w = -1}^{W - 2} \right\}_{n = 0}^{N - 1}$.
    \end{algorithmic}
\end{algorithm}

\begin{algorithm}[!ht]
    \hypertarget{caption:imagine}{}
    \small
    \caption{Imagine}
    \label{algo:imagine}
    \begin{algorithmic}
        \STATE {\bfseries Inputs:} Update function $u_\theta$, prior $q_\theta^p$, policy $g_\phi$, statistics, encoded latents and actions $\left\{ (z^n_w, e^n_w, a^n_w)_{w = -1}^{W-2} \right\}_{n = 0}^{N - 1}$.
        \STATE Let $z^{n,w}_{-1} = z^n_w$, $\hat e^{n,w}_{-1} = e^n_w$, $a^{n,w}_{-1} = a^n_w$.
        \FOR{$k = 0 \dots K-1$}
            \STATE Let $z^{n,w}_{k} = u_\theta(z^{n,w}_{k-1}, a^{n,w}_{k-1}, \hat e^{n,w}_{k-1})$.
            \STATE Let $\hat e^{n,w}_{k} \sim q_\theta^p(\cdot | z^{n,w}_{k}, a^{n,w}_{k})$.
            \STATE Let $a^{n,w}_{k} \sim g_\phi(\cdot | z^{n,w}_{k})$.
        \ENDFOR
        \STATE {\bfseries Returns:} $\left\{ \left\{ (z^{n,w}_{k}, \hat e^{n,w}_{k})_{k = 0}^{K - 1} \right\}_{w = -1}^{W-2} \right\}_{n = 0}^{N - 1}$.
    \end{algorithmic}
\end{algorithm}

\section{Final returns} \label{app:table}

We provide the final returns obtained by Dreamer and the Informed Dreamer for the Varying Mountain Hike environments in \autoref{tab:hike}, for the Velocity Control environments in \autoref{tab:velocity}, and for the Pop Gym environments in \autoref{tab:pop}.

\begin{table}[!ht]
    \fontsize{9pt}{10pt}\selectfont
    \vspace{-0.5ex}
    \caption{%
        Average final return and standard deviation over five trainings in the Mountain Hike environments.
    }
    \vspace{-1.5ex}
    \label{tab:hike}
    \begin{center}
        \begin{tabular}{cccc}
\toprule
Altitude & Random & Uninformed & Informed \\
\midrule
False & False & $-13.70 \pm 03.32$ & $\mathbf{-13.35 \pm 02.93}$ \\
False & True & $-18.32 \pm 06.04$ & $\mathbf{-17.72 \pm 04.19}$ \\
True & False & $\mathbf{-14.78 \pm 02.44}$ & $-14.98 \pm 04.73$ \\
True & True & $-67.05 \pm 21.76$ & $\mathbf{-45.94 \pm 32.77}$ \\
\bottomrule
\end{tabular}

    \end{center}
    \vspace{-2ex}
\end{table}

\begin{table}[!ht]
    \fontsize{9pt}{10pt}\selectfont
    \vspace{-0.5ex}
    \caption{
        Average final return and standard deviation over five trainings in the Velocity Control environments.
    }
    \vspace{-1.5ex}
    \label{tab:velocity}
    \begin{center}
        \begin{tabular}{cccccccccccccccccc}
\toprule
Task & Uninformed & Informed \\
\midrule
Acrobot Swingup & $\mathbf{113.73 \pm 108.03}$ & $112.49 \pm 54.67$ \\
Cartpole Balance & $511.60 \pm 01.95$ & $\mathbf{513.22 \pm 00.82}$ \\
Cartpole Balance Sparse & $\mathbf{491.07 \pm 00.00}$ & $485.34 \pm 49.39$ \\
Cartpole Swingup & $347.58 \pm 18.30$ & $\mathbf{371.24 \pm 05.62}$ \\
Cartpole Swingup Sparse & $36.98 \pm 42.83$ & $\mathbf{102.44 \pm 139.79}$ \\
Cheetah Run & $\mathbf{315.40 \pm 39.64}$ & $305.91 \pm 103.62$ \\
Cup Catch & $465.23 \pm 28.77$ & $\mathbf{468.32 \pm 12.53}$ \\
Finger Spin & $186.66 \pm 39.34$ & $\mathbf{245.77 \pm 61.99}$ \\
Finger Turn Easy & $359.32 \pm 76.13$ & $\mathbf{414.82 \pm 46.09}$ \\
Finger Turn Hard & $347.91 \pm 81.80$ & $\mathbf{398.38 \pm 63.40}$ \\
Hopper Hop & $91.05 \pm 29.62$ & $\mathbf{97.50 \pm 29.83}$ \\
Hopper Stand & $350.77 \pm 88.92$ & $\mathbf{384.44 \pm 74.34}$ \\
Pendulum Swingup & $\mathbf{301.01 \pm 39.80}$ & $233.66 \pm 199.66$ \\
Reacher Easy & $463.30 \pm 17.78$ & $\mathbf{477.51 \pm 14.02}$ \\
Reacher Hard & $391.94 \pm 148.99$ & $\mathbf{466.35 \pm 25.94}$ \\
Walker Run & $238.07 \pm 76.42$ & $\mathbf{271.72 \pm 63.37}$ \\
Walker Stand & $\mathbf{462.81 \pm 18.20}$ & $460.51 \pm 41.87$ \\
Walker Walk & $429.65 \pm 27.06$ & $\mathbf{440.85 \pm 49.87}$ \\
\bottomrule
\end{tabular}

    \end{center}
    \vspace{-2ex}
\end{table}

\begin{table}[!ht]
    \fontsize{9pt}{10pt}\selectfont
    \vspace{-0.5ex}
    \caption{
        Average final return and standard deviation over five trainings in the Pop Gym environments.
    }
    \vspace{-1.5ex}
    \label{tab:pop}
    \begin{center}
        \begin{tabular}{cccccccccc}
\toprule
Task & Uninformed & Informed \\
\midrule
Concentration & $\mathbf{00.01 \pm 00.16}$ & $-0.24 \pm 00.09$ \\
Count Recall & $-0.66 \pm 00.17$ & $\mathbf{-0.58 \pm 00.24}$ \\
Higher Lower & $\mathbf{00.39 \pm 00.07}$ & $00.31 \pm 00.12$ \\
Mine Sweeper & $\mathbf{-0.06 \pm 00.32}$ & $-0.07 \pm 00.38$ \\
Noisy Position Cart Pole & $00.21 \pm 00.19$ & $\mathbf{00.23 \pm 00.27}$ \\
Noisy Position Pendulum & $00.54 \pm 00.06$ & $\mathbf{00.55 \pm 00.05}$ \\
Position Cart Pole & $00.75 \pm 00.00$ & $\mathbf{00.75 \pm 00.00}$ \\
Position Pendulum & $00.64 \pm 00.07$ & $\mathbf{00.65 \pm 00.04}$ \\
Repeat First & $00.24 \pm 00.87$ & $\mathbf{00.56 \pm 01.00}$ \\
Repeat Previous & $-0.01 \pm 00.18$ & $\mathbf{00.44 \pm 00.13}$ \\
\bottomrule
\end{tabular}

    \end{center}
    \vspace{-2ex}
\end{table}

\section{Additional experiments} \label{app:flickering}

In this section, we provide results for non-Markovian information in the Varying Mountain Hike environments, for harder Pop Gym environments, along with the results of the flickering environments.

\subsection{Non-Markovian information} \label{subsec:qualities}

We experiment with other levels of information in the Varying Mountain Hike environments.
More precisely, we consider an information $i$ that contains an observation $\tilde x$ of the position $x$ (or an observation $\tilde y$ of the altitude $y$) with Gaussian noise of standard deviation $\sigma_i \in [0, \sigma_o]$.
In addition, in the case of environments with random orientation, we consider an information that also contains a noisy observation of the orientation $c$ replaced with a random orientation with probability $\epsilon_i \in [0, 1]$.
Note that when $\sigma_i = 0$, the exact position $x$ (or altitude $y$) is encoded in the information, while when $\sigma_i = \sigma_o$, the observation $o$ is encoded in the information.

\begin{figure}[ht]
    \centering
    \vspace{-2ex}
    \includegraphics[width=0.75\linewidth]{{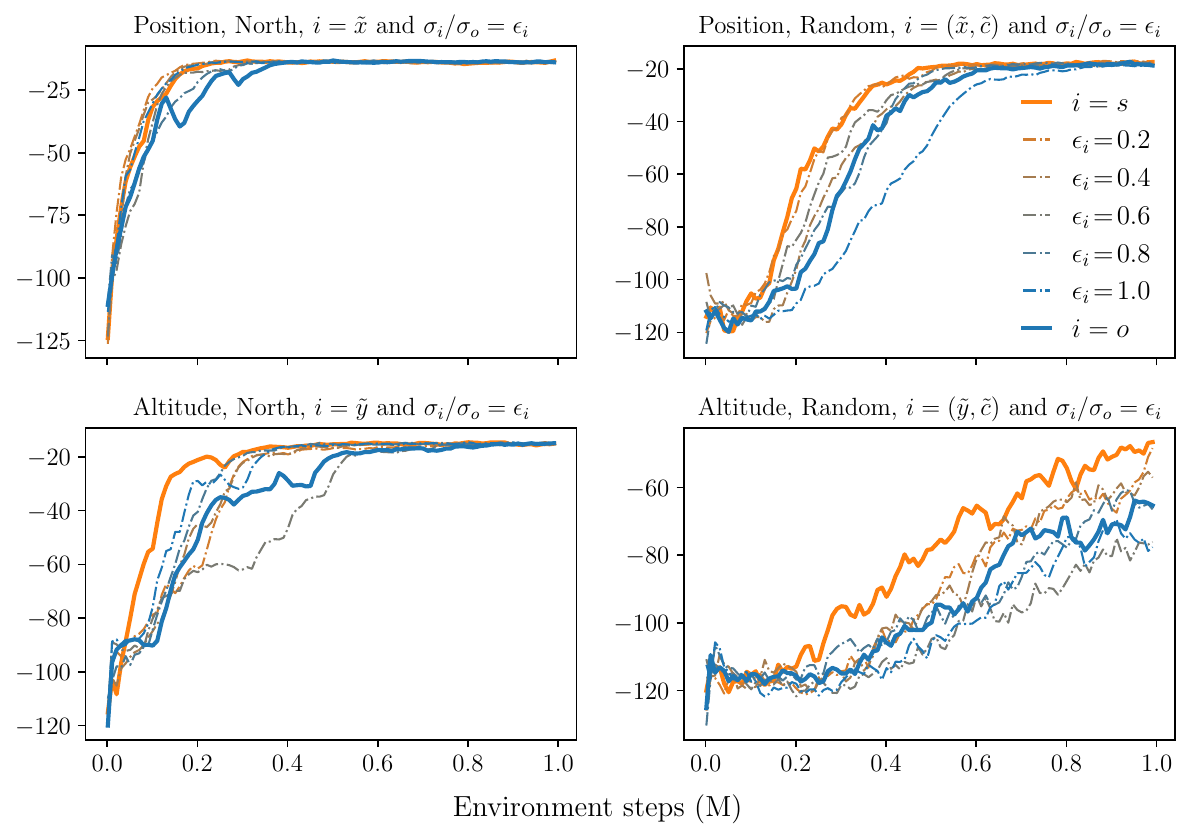}}
    \vspace{-3ex}
    \caption{%
        Varying Mountain Hike environments: average return of the Informed Dreamer with various level of information over five trainings.
    }
    \label{fig:qualities}
    \vspace{-0.5ex}
\end{figure}

As shown in \autoref{fig:qualities}, without confidence intervals for the sake of readability, the better the information, the faster the policy converges.
These results hold in all environments except that with altitude observation and fixed orientation, for which the results are more mixed.
As said in \autoref{subsec:hike}, it supports the hypothesis that the more informative about the state the information is, the faster an optimal policy is learned.
Moreover, it can be observed that when an additional information $\tilde c$ is not informative about the state, convergence is slower than for the Uninformed Dreamer.
This highlights again the importance of the quality of the additional information.

\subsection{Harder Pop Gym environments} \label{subsec:hard}

Despite the performance of the informed policy being equal to the performance of the uninformed policy at optimum, there may exists environments for which the optimum is never reached in practice without considering additional information at training time.
We observe it to be the case for environments with long time dependencies, such as the Repeat Previous environment of the Pop Gym suite.
In this subsection, we study in depth this failure case of the Uninformed Dreamer for this particular environment.
In the Repeat Previous environment, the agent is observing random noise, and is rewarded for outputting the observation that it got $k$ time steps ago.
While in \autoref{subsec:pop} we only considered the default Pop Gym environments, where $k = 4$ for the Repeat Previous environment, we here consider the Medium ($k = 32$) and Hard ($k = 64$) versions of this environment.

\begin{figure}[ht]
    \centering
    \vspace{-2ex}
    \includegraphics[width=0.65\linewidth]{{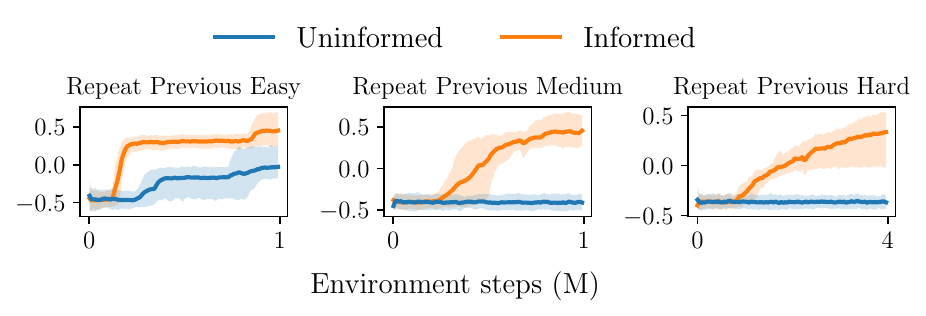}}
    \vspace{-2ex}
    \caption{
        Uninformed Dreamer and Informed Dreamer with $i = s$ in the Repeat Previous environments: minimum, maximum and average returns over five trainings.
    }
    \label{fig:hard}
    \vspace{-0.5ex}
\end{figure}

In \autoref{fig:hard}, we see that the Uninformed Dreamer is not able to improve the performance of its policy at all in these harder environments, while the Informed Dreamer still seems to converge towards a near-optimal policy.
It once again validates empirically the assumption that exploiting additional information about the state improves the speed of convergence towards an optimal policy.
Even more, it shows that exploiting additional information about the state can lead to convergence in environments where traditional approaches fail, such as those with long time dependencies.
The additional supervision provided by this Markovian information (the last $k$ observations) certainly endows the statistic $z \sim f(\cdot | h)$ with a useful encoding of the last $k$ observations, which is then decoded by the policy.
\autoref{tab:hard} provides the final return obtained by the Uninformed Dreamer and the Informed Dreamer for these environments.

\begin{table}[ht]
    \fontsize{9pt}{10pt}\selectfont
    \vspace{-0.5ex}
    \caption{
        Average final return and standard deviation over five trainings in the Repeat Previous environments.
    }
    \vspace{-1.5ex}
    \label{tab:hard}
    \begin{center}
        \begin{tabular}{ccc}
\toprule
Task & Uninformed & Informed \\
\midrule
Repeat Previous Easy & $-0.01 \pm 00.18$ & $\mathbf{00.44 \pm 00.13}$ \\
Repeat Previous Medium & $-0.41 \pm 00.06$ & $\mathbf{00.46 \pm 00.16}$ \\
Repeat Previous Hard & $-0.36 \pm 00.07$ & $\mathbf{00.33 \pm 00.19}$ \\
\bottomrule
\end{tabular}

    \end{center}
    \vspace{-2ex}
\end{table}

\subsection{Flickering Atari} \label{subsec:atari}

In the Flickering Atari environments, the agent is tasked with playing the Atari games \citep{bellemare2013arcade} on a flickering screen.
The dynamics are left unchanged, but the agent may randomly observe a blank screen instead of the game screen, with probability $p = 0.5$.
While the classic Atari games are known to have low stochasticity and few partial observability challenges \citep{hausknecht2015deep}, their flickering counterparts have constituted a classic benchmark in the partially observable RL literature \citep{hausknecht2015deep, zhu2017improving, igl2018deep, ma2020discriminative}.
Moreover, regarding the recent advances in sample-efficiency of model-based RL approaches, we consider the Atari 100k benchmark, where only 100k actions can be taken by the agent for generating samples of interaction.

For these environments, we consider the RAM state of the simulator, a $128$-dimensional byte vector, to be available as additional information for supervision.
This information vector is indeed guaranteed to satisfy the conditional independence of the informed POMDP: $p(o | i, s) = p(o | i)$.
Moreover, we postprocess this additional information by only selecting the subset of variables that are relevant to the game that is considered, according to the annotations provided by \citet{anand2019unsupervised}.
Depending on the game, this information vector might contain the number of remaining opponents, their positions, the player position, etc.

\begin{figure}[ht]
    \centering
    \vspace{-1ex}
    \includegraphics[width=\linewidth]{{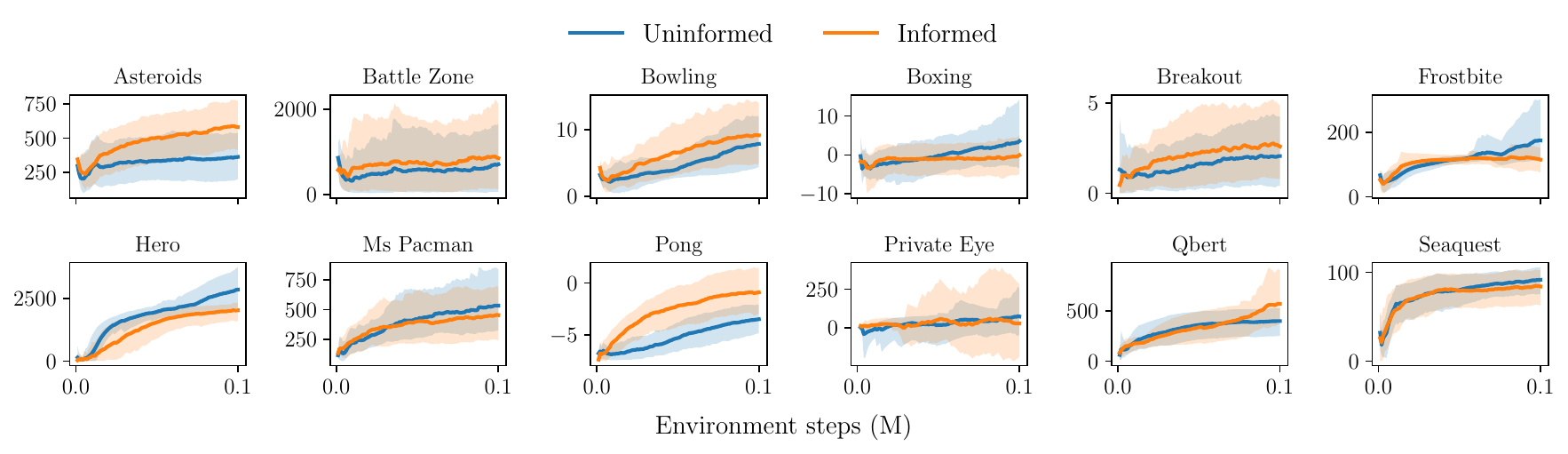}}
    \vspace{-3ex}
    \caption{
        Uninformed Dreamer and Informed Dreamer with $i = \phi(\text{RAM})$ in the Flickering Atari environments: minimum, maximum and average returns over five trainings.
    }
    \label{fig:atari}
    \vspace{-0.5ex}
\end{figure}

\autoref{fig:atari} shows that the speed of convergence and the performance of the policies is greatly improved by considering additional information for six environments, while degraded for four others and left similar for the rest.
The final returns are given in \autoref{tab:atari}, offering similar conclusions.

\begin{table}[ht]
    \fontsize{9pt}{10pt}\selectfont
    \vspace{-0.5ex}
    \caption{
        Average final return and standard deviation over five trainings in the Flickering Atari environments.
    }
    \vspace{-1.5ex}
    \label{tab:atari}
    \begin{center}
        \begin{tabular}{cccccccccccc}
\toprule
Task & Uninformed & Informed \\
\midrule
Asteroids & $362.17 \pm 112.95$ & $\mathbf{580.92 \pm 95.61}$ \\
Battle Zone & $706.67 \pm 776.00$ & $\mathbf{849.61 \pm 357.35}$ \\
Bowling & $07.89 \pm 02.00$ & $\mathbf{09.17 \pm 01.24}$ \\
Boxing & $\mathbf{03.54 \pm 12.33}$ & $-0.06 \pm 05.66$ \\
Breakout & $02.06 \pm 01.32$ & $\mathbf{02.59 \pm 01.47}$ \\
Frostbite & $\mathbf{174.96 \pm 84.31}$ & $115.43 \pm 30.20$ \\
Hero & $\mathbf{2864.66 \pm 1054.84}$ & $2033.51 \pm 226.50$ \\
Ms Pacman & $\mathbf{534.67 \pm 117.97}$ & $455.02 \pm 155.17$ \\
Pong & $-3.49 \pm 01.19$ & $\mathbf{-0.90 \pm 01.78}$ \\
Private Eye & $\mathbf{74.27 \pm 42.00}$ & $29.66 \pm 67.47$ \\
Qbert & $401.27 \pm 117.26$ & $\mathbf{574.70 \pm 26.92}$ \\
Seaquest & $\mathbf{91.44 \pm 13.60}$ & $83.95 \pm 21.11$ \\
\bottomrule
\end{tabular}

    \end{center}
    \vspace{-2ex}
\end{table}

\subsection{Flickering Control} \label{subsec:vision}

In the Flickering Control environments, the agent performs one of the standard DeepMind Control tasks from images but through a flickering screen.
As with the Flickering Atari environments, the dynamics are left unchanged, except that the agent may randomly observe a blank screen instead of the task screen, with probability $p = 0.5$.
For these environments, we consider the state to be available as additional information, as for the Velocity Control environments.

\begin{figure}[!ht]
    \centering
    \vspace{-1ex}
    \includegraphics[width=\linewidth]{{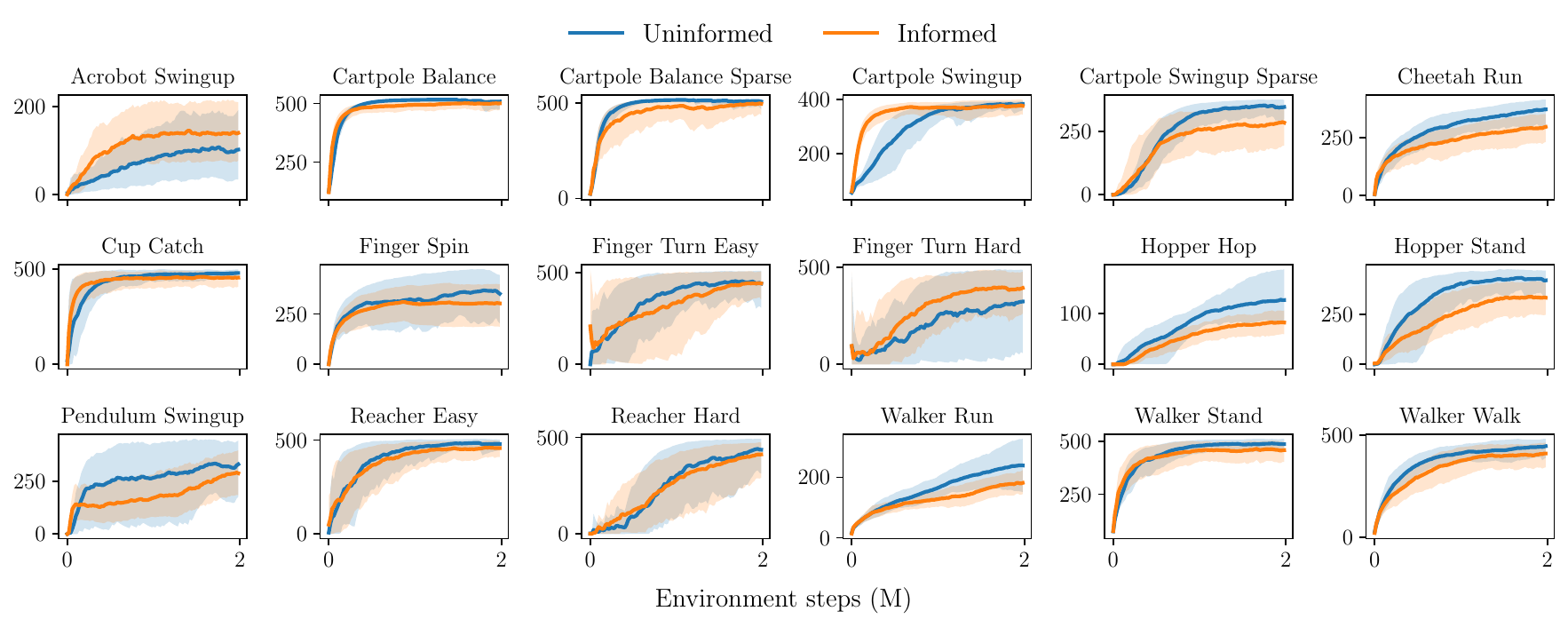}}
    \vspace{-3ex}
    \caption{
        Uninformed Dreamer and Informed Dreamer with $i = s$ in the Flickering Control environments: minimum, maximum and average returns over five trainings.
    }
    \label{fig:vision}
    \vspace{-0.5ex}
\end{figure}

Regarding this benchmark, considering additional information seems to degrade learning, generally resulting in worse policies.
This suggests that not all information is good to learn, some might be irrelevant to the control task and hinders the learning of optimal policies.
The final returns are given in \autoref{tab:vision}, and offer similar conclusions.

\begin{table}[!ht]
    \fontsize{9pt}{10pt}\selectfont
    \vspace{-0.5ex}
    \caption{
        Average final return and standard deviation over five trainings in the Flickering Control environments.
    }
    \vspace{-1.5ex}
    \label{tab:vision}
    \begin{center}
        \begin{tabular}{cccccccccccccccccc}
\toprule
Task & Uninformed & Informed \\
\midrule
Acrobot Swingup & $104.87 \pm 54.88$ & $\mathbf{141.49 \pm 72.53}$ \\
Cartpole Balance & $\mathbf{508.01 \pm 00.92}$ & $499.95 \pm 24.87$ \\
Cartpole Balance Sparse & $\mathbf{507.94 \pm 03.04}$ & $495.14 \pm 69.63$ \\
Cartpole Swingup & $\mathbf{384.37 \pm 14.66}$ & $377.60 \pm 32.62$ \\
Cartpole Swingup Sparse & $\mathbf{347.07 \pm 27.63}$ & $284.53 \pm 72.05$ \\
Cheetah Run & $\mathbf{372.96 \pm 30.98}$ & $296.70 \pm 23.34$ \\
Cup Catch & $\mathbf{478.61 \pm 12.53}$ & $455.59 \pm 13.58$ \\
Finger Spin & $\mathbf{349.85 \pm 123.88}$ & $303.03 \pm 76.30$ \\
Finger Turn Easy & $\mathbf{441.53 \pm 47.13}$ & $441.16 \pm 66.91$ \\
Finger Turn Hard & $323.19 \pm 200.67$ & $\mathbf{392.48 \pm 85.25}$ \\
Hopper Hop & $\mathbf{126.72 \pm 37.89}$ & $81.92 \pm 19.90$ \\
Hopper Stand & $\mathbf{420.38 \pm 57.48}$ & $331.48 \pm 27.61$ \\
Pendulum Swingup & $\mathbf{329.35 \pm 82.31}$ & $286.53 \pm 102.18$ \\
Reacher Easy & $\mathbf{479.25 \pm 18.15}$ & $457.72 \pm 19.31$ \\
Reacher Hard & $\mathbf{433.40 \pm 214.42}$ & $412.97 \pm 27.10$ \\
Walker Run & $\mathbf{239.22 \pm 92.40}$ & $180.63 \pm 27.73$ \\
Walker Stand & $\mathbf{485.78 \pm 46.26}$ & $457.36 \pm 37.65$ \\
Walker Walk & $\mathbf{447.03 \pm 26.83}$ & $409.72 \pm 68.67$ \\
\bottomrule
\end{tabular}

    \end{center}
    \vspace{-2ex}
\end{table}

\end{document}